\algnewcommand{\lst}{\texttt{lst}}
\algnewcommand{\slst}{\texttt{slst}}
\algnewcommand{\SEND}{\paragraph{send}}
\newsavebox{\algleft}
\newsavebox{\algright}
\begin{document}
\newtheorem{prop}{Proposition}
\renewcommand{\algorithmicrequire}{\paragraph{Input:}}
\renewcommand{\algorithmicensure}{\paragraph{Output:}}
\makeatletter

\newtheorem{property}{Property}
\newtheorem{theorem}{Theorem}
\newtheorem{lemma}{Lemma}
\newtheorem{proposition}{Proposition}
\newtheorem{corollary}{Corollary}
\newtheorem{innercustomthm}{Theorem}
\newenvironment{customthm}[1]
  {\renewcommand\theinnercustomthm{#1}\innercustomthm}
  {\endinnercustomthm}

%

%

\twocolumn[

\aistatstitle{Online Binary Space Partitioning Forests}

\aistatsauthor{ Xuhui Fan \And Bin Li \And  Scott A. Sisson }

\aistatsaddress{School of Mathematics and Statistics\\University of New South Wales\\xuhui.fan@unsw.edu.au \And  School of Computer Science\\Fudan University\\libin@fudan.edu.cn \And School of Mathematics and Statistics\\University of New South Wales\\scott.sisson@unsw.edu.au} ]

\begin{abstract}
The Binary Space Partitioning-Tree~(BSP-Tree) process was recently proposed as an efficient strategy for space partitioning tasks. Because it uses more than one dimension to partition the space, the BSP-Tree Process is more efficient and flexible than conventional axis-aligned cutting strategies. However, due to its batch learning setting, it is not well suited to large-scale classification and regression problems. In this paper, we develop an online BSP-Forest framework to address this limitation. With the arrival of new data, the resulting online algorithm can simultaneously expand the space coverage and refine the partition structure, with guaranteed universal consistency for both classification and regression problems. The effectiveness and competitive performance of the online BSP-Forest is verified via simulations on real-world datasets.
\end{abstract}

\section{Introduction}

The BSP-Tree Process~\cite{xuhui2016OstomachionProcess,pmlr-v84-fan18b,pmlr-v89-fan18a} is a stochastic space partitioning process defined in a multi-dimensional space with a binary-partition strategy. Its general goal is to identify meaningful ``blocks'' in the space, so that data within each block exhibits some form of homogeneity. Similar to other space partitioning processes~\cite{kemp2006learning,roy2009mondrian,nakano2014rectangular,NIPS2018_RBP}, the BSP-Tree Process can be applied in many areas, including relational modeling~\cite{kemp2006learning,airoldi2009mixed,fan2016copula,SDREM}, community detection~\cite{nowicki2001estimation,karrer2011stochastic}, collaborative filtering~\cite{porteous2008multi,Li_transfer_2009}, and random forests~\cite{LakRoyTeh2014a}. 

Instead of the axis-aligned cuts adopted in most conventional approaches~\cite{kemp2006learning,roy2009mondrian,nakano2014rectangular}, the BSP-Tree Process implements oblique cuts (in more than one dimension) to recursively partition the space into new sub-spaces. 
In this way, it can describe the dimensional dependence more efficiently, in terms of fewer cuts or improved prediction performance. In addition, the BSP-Tree Process has the attractive theoretical property of self-consistency. Based on this property,  a projective system~\cite{CraneProjectiveSystem} can be constructed to ensure distributional invariance, when restricting the process from a larger domain to a smaller one, and safely extend a finite domain to multi-dimensional infinite space. 

Despite the existing clear potentials, there are two main challenges in the BSP-Tree Process. (1) {\it Scalability}: Its batch learning mode is unappealing given the practical real, large-scale datasets, as multiple scans over the data are required. (2) {\it Theoretical properties}: 
Moving from a batch to an online learning algorithm, rigorous universal consistency properties are required to ensure the theoretical correctness of the underlying BSP-Tree Process prior as the domain varies with the arrival of new data. This can be challenging due to
the recursive structure of the partition strategy and the Markov property of partition refinement.

In this paper we propose an online BSP-Forest framework to address these challenges.
Instead of batch model training, our framework sequentially incorporates each data point and updates the model without the need for data labels. The tree structure update in the forest is implemented in two stages. {\em (i) Node coverage expansion:} Within the nodes~(all of nodes are convex polyhedron-shaped) of each tree, we focus on the convex hull that contains all of its allocated datapoints. When a new point arrives in this node but falls outside the convex hull, we enlarge the hull and change the tree structure. {\em (ii) Refinement over existing partitions:} We use a monotonically increasing budget sequence for new node creation, meaning that more data tend to result in finer partitions. These stages are each supported by the self-consistency and Markov properties of the BSP-Tree Process. In addition, we demonstrate its universal consistency by proving that the online BSP-Forest will converge to the Bayesian error for all possible label distributions. The effectiveness and competitive performance of the online BSP-Forest in classification and regression is verified through extensive simulation studies.


%

\begin{figure*}[t]
\centering
\includegraphics[width =  0.4 \textwidth]{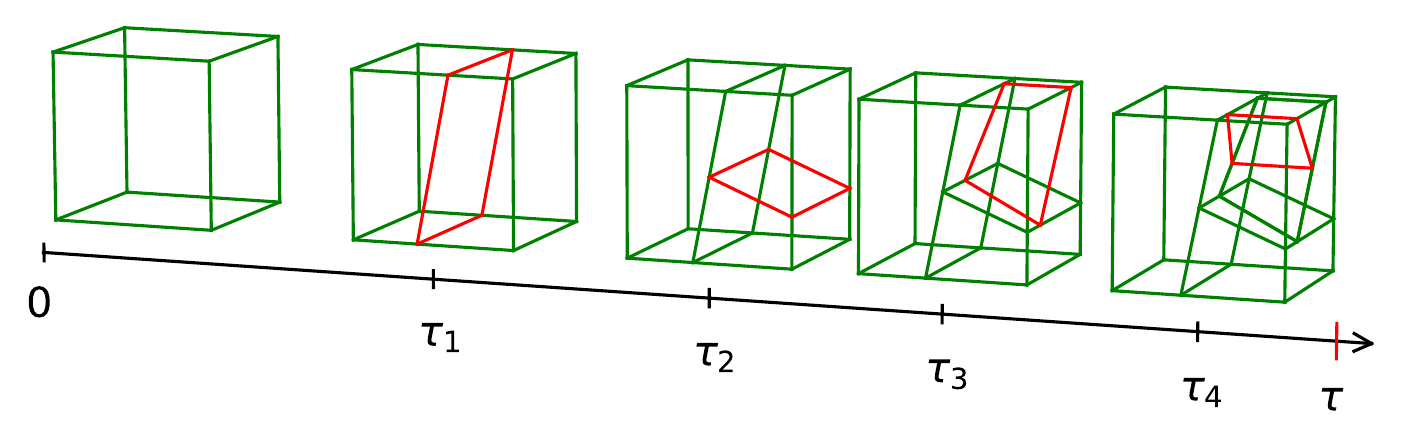}\quad
\includegraphics[width =  0.55 \textwidth]{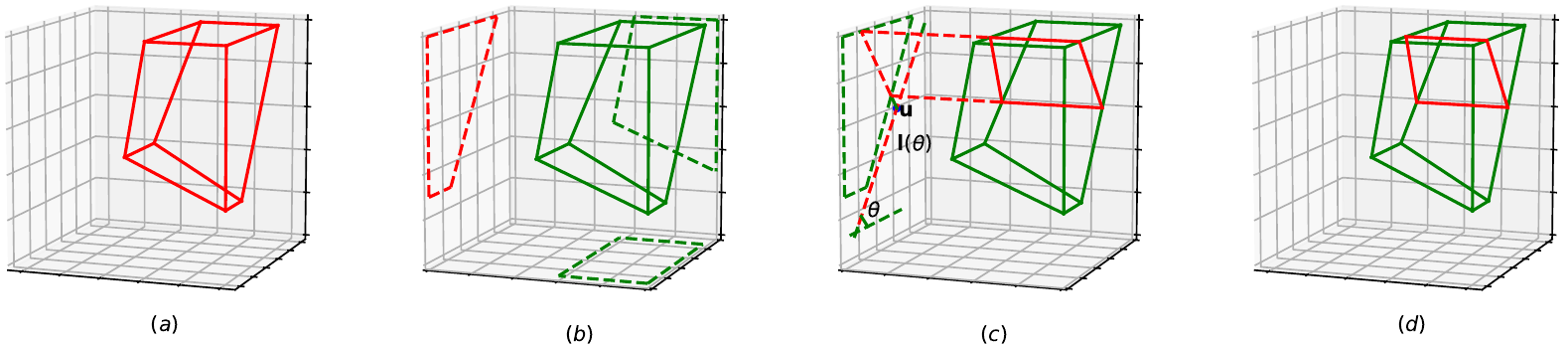}
\caption{{Left: a realisation of a 3-dimensional BSP-Tree Process with budget $\tau$. Each red-line constituted polygon denotes a new cutting hyperplane generated at a particular time. Right: visualization of generating a cutting hyperplane in a $3$-dimensional block $\square$ in Algorithm \ref{algo:BSP_partition_one_block}: (a) Visualize the block $\square$; (b) Select the dimensional pair; (c) Sample the direction $\theta$ and cut position $\pmb{u}$; and (d) Form a new cut in $\square$.}}
\label{fig:generativeprocesspartition}
\end{figure*}

\section{The BSP-Tree Process}

The BSP-Tree Process~\cite{pmlr-v84-fan18b,pmlr-v89-fan18a} is a time-indexed, right-continuous Markov jump process. At any point of the time line, the BSP-Tree Process takes the state of a hierarchical binary partition over the multi-dimensional space (Figure~\ref{fig:generativeprocesspartition}). Starting at time $0$ with the initial space (no cuts), new random oblique cuts are created over time. 
Each selected region for partition is named a block, where each cut recursively divides the block into two smaller ones. Given any two time points $0<t_1\le t_2$, the partition 
at $t_2$ is a further refinement 
of the partition 
at $t_1$ (e.g.~there might be additional cuts). 

The distinguishing feature of the BSP-Tree Process is the oblique cut it uses to partition the space. Unlike its axis-aligned cut counterparts \cite{roy2009mondrian}, the BSP-Tree Process uses more than one dimension to form the cutting hyperplanes, and so is  a more efficient cutting strategy (i.e.~it uses fewer cuts to achieve similar performance).

We introduce some notation before formally describing the BSP-Tree Process. For a $d$-dimensional space $\square$~(which is assumed to be a convex polyhedron), we let $\mathcal{D}=\{(1, 2), (1, 3), \ldots, (d-1, d)\}$ denote the set of all  two dimensional pairs. For each element $(d_1, d_2)$ in $\mathcal{D}$, $\Pi_{d_1, d_2}(\square)$ denotes the projection of $\square$ onto the dimensions of $(d_1,d_2)$ (which becomes a 2-dimensional polygon), $L_{(d_1, d_2)}(\square)$ denotes the corresponding perimeter, $\pmb{l}_{\Pi_{d_1, d_2}(\square)}(\theta)$ lies in the 2D space of dimensions $(d_1, d_2)$ and represents the line segment of $\Pi_{d_1, d_2}(\square)$ in the direction of the angle $\theta$ (in the dimensions of $(d_1, d_2)$; Figure~\ref{fig:generativeprocesspartition}~(Right)), $\pmb{u}$ is the cutting position in the segment $\pmb{l}_{\Pi_{d_1, d_2}(\square)}(\theta)$, $c$ represents the cost of a cut in the time line and $C=(c, (d_1^*, d_2^*), \theta, \pmb{u})$ represents one cutting hyperplane with all these variables.

Algorithm~\ref{algo:BSP_partition total} describes generation of the recursive hierarchical partition under the BSP-Tree Process. Algorithm~\ref{algo:BSP_partition_one_block} generates an oblique cut in a particular block $\square$; the right part of Figure~\ref{fig:generativeprocesspartition} illustrates Lines 3--6 in Algorithm~\ref{algo:BSP_partition_one_block}. To generate an oblique cut in $\square$, we first generate the cost of the proposed cut from an Exponential distribution, parameterised by the sum of perimeters of all of the block's two-dimensional projections~(Line~$1$). If $c>\tau$ so the cost of the candidate cut exceeds the budget $\tau$, the cut is not generated and the current block $\square$ is returned as the final partition structure in that branch~(Line~$9$); otherwise we sample a 2-dimensional pair $(d_1^*, d_2^*)$ in proportion to $\{{L}_{(d_1, d_2)}(\square)\}_{(d_1,d_2)\in\mathcal{D}}$~(Line~$3$).  The cutting hyperplane will be generated in this dimensional pair, and is parallel to the other dimensions. Line~$4$ generates the angle $\theta^*$ with density proportional to $|\pmb{l}_{\Pi_{d_1^*, d_2^*}(\square)}(\theta)|$, and the cutting position $\pmb{u}$ uniformly in the segment $\pmb{l}_{\Pi_{d_1^*, d_2^*}(\square)}(\theta^*)$ for the pair $(d_1^*, d_2^*)$. The cutting hyperplane is then formed as $H\left((d_1^*, d_2^*), \theta^*, \pmb{u}\right)=\{\pmb{x}\in\square\vert([x_{d_1^*}, x_{d_2^*}]-\pmb{u})(1;\tan\theta^*)^{\top}=0\}$ and cuts the block $\square$ into two new sub-blocks~(Line~$5$). Line~$6$ recursively generates {\it independent} BSP partitions $\mathcal{B}', \mathcal{B}''$ on the new blocks $\square',\square''$. 

One attractive property of BSP-Tree Process is that it is self-consistent~\cite{pmlr-v84-fan18b,pmlr-v89-fan18a}. This helps to provide an efficient convex hull representation of nodes in a tree when in a multi-dimensional space.
\begin{theorem} \label{fact_1}
(Self-Consistency) Given a partition sampled from the BSP-Tree Process on the convex domain $\square$, let $\triangle\subset\square$ be a convex subdomain of $\square$. Then, the restriction of this partition on $\triangle$ has the same distribution as
the restriction directly sampled 
from the BSP-Tree Process on $\triangle$. 
\end{theorem}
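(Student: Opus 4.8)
The plan is to recast Algorithm~\ref{algo:BSP_partition_one_block} as the successive sampling of oblique cutting hyperplanes from a single intrinsic measure on hyperplanes, and then to exploit a thinning identity together with the memorylessness of the exponential cost. Write $P_\square=\sum_{(d_1,d_2)\in\mathcal{D}}L_{(d_1,d_2)}(\square)$. The three sampling steps of Algorithm~\ref{algo:BSP_partition_one_block}---choosing $(d_1^*,d_2^*)$ with probability $L_{(d_1,d_2)}(\square)/P_\square$, then $\theta^*$ with density $\propto|\pmb{l}_{\Pi_{d_1^*,d_2^*}(\square)}(\theta)|$, then $\pmb{u}$ uniformly on that segment---compose into a single density on hyperplanes $H$ that, after multiplication by the cost rate $P_\square$, is a fixed measure $\mu$ restricted to $\{H:H\cap\square\neq\emptyset\}$ with total mass $\mu(\{H:H\cap\square\neq\emptyset\})=P_\square$. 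I would first verify this by a Cauchy/Crofton computation in each two-dimensional projection: since an oblique cut in the pair $(d_1,d_2)$ is determined entirely by its shadow $\Pi_{d_1,d_2}(\square)$, the mass of lines meeting a convex polygon equals its perimeter $L_{(d_1,d_2)}(\square)$, and summing over pairs recovers $P_\square$.

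The crucial observation---and the analogue of the Mondrian self-consistency argument---is that this measure is intrinsic to the hyperplane, so for any convex $\triangle\subseteq B$ the quantity $\mu(\{H:H\cap\triangle\neq\emptyset\})=P_\triangle$ depends only on $\triangle$ and not on the enclosing block $B$. Consequently, when cuts of a current block $B\supseteq\triangle$ arrive at cost-rate $P_B$ and each independently meets $\triangle$ with probability $\mu(\{H\cap\triangle\neq\emptyset\})/\mu(\{H\cap B\neq\emptyset\})=P_\triangle/P_B$, the thinned process of $\triangle$-hitting cuts has constant rate $P_B\cdot(P_\triangle/P_B)=P_\triangle$. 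By Poisson thinning and memorylessness of the exponential, the cumulative cost at the first cut that actually meets $\triangle$ is therefore $\mathrm{Exp}(P_\triangle)$, regardless of how many intervening cuts missed $\triangle$ and merely shrank the block around it.

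With this lemma, I would prove the theorem by induction on the (a.s.\ finite) number of cuts that the restricted partition places inside $\triangle$. Conditioning on the first cut of the block $B$ containing $\triangle$: if it misses $\triangle$, then $\triangle$ lies in one child $B'$, the process continues as an independent BSP on $B'$ with the reduced budget, and the Markov property lets me apply the inductive hypothesis to $(\triangle,B')$ unchanged. If it hits $\triangle$, then conditioning $\mu|_{\{H\cap B\neq\emptyset\}}$ on $\{H\cap\triangle\neq\emptyset\}$ returns $\mu|_{\{H\cap\triangle\neq\emptyset\}}$ normalised---exactly the hyperplane law of the first cut of a BSP run directly on $\triangle$; combined with the $\mathrm{Exp}(P_\triangle)$ cost from the lemma, this matches the first step of the direct process on $\triangle$. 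This cut splits $\triangle$ into $\triangle'=\triangle\cap B'$ and $\triangle''=\triangle\cap B''$, the recursions on $B',B''$ are independent with common remaining budget, and applying the inductive hypothesis to $(\triangle',B')$ and $(\triangle'',B'')$ closes the argument.

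The main obstacle is the integral-geometric identity and, above all, its block-independence: establishing that the measure of cutting hyperplanes meeting $\triangle$ equals $P_\triangle$ intrinsically---so that the hit-rate is the constant $P_\triangle$ no matter how $B$ degrades through missed cuts---is what makes the thinning collapse onto the direct process, and it is the step that genuinely uses the oblique, projection-decomposed geometry rather than a routine exponential-race computation. A secondary technical point is justifying that the partition has a.s.\ finitely many cuts, so that the induction is well founded and the thinned process does not explode; this follows from the finiteness already established for the BSP-Tree Process in the batch setting.
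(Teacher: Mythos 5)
The paper does not actually prove Theorem~1 --- it imports self-consistency from the cited BSP-Tree Process papers \cite{pmlr-v84-fan18b,pmlr-v89-fan18a} --- so there is no in-paper proof to compare against; your argument is correct and is essentially the standard one used in those references (and for the Mondrian process): the sampling steps compose into the normalised restriction of an intrinsic Crofton-type measure whose mass on hyperplanes hitting a convex set equals the sum of projected perimeters (Cauchy's formula in each $2$D projection), so thinning plus memorylessness makes the first $\triangle$-hitting cut arrive at rate $P_\triangle$ with the correct conditional hyperplane law, after which one recurses on the two children. The only point to tidy is the induction scheme: you cannot induct directly on the (random) number of cuts falling inside $\triangle$; instead induct on recursion depth or match the joint law of the first $k$ cuts for every fixed $k$, invoking the a.s.\ finiteness of the partition to conclude.
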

\begin{algorithm}[t]
\caption{\small BSP~($\square, \tau$)}
\label{algo:BSP_partition total}
{\small\begin{algorithmic}[1]
\STATE Call BlockCut($\square, \tau$)
\end{algorithmic}}
\end{algorithm}
\begin{algorithm}[t]
\caption{\small BlockCut~($\square, \tau$)}
\label{algo:BSP_partition_one_block}
{\small \begin{algorithmic}[1]
\STATE Sample cost {\small $c\sim\text{Exp}(\sum_{(d_1,d_2)\in\mathcal{D}}L_{(d_1, d_2)}(\square))$}
\IF{$c<\tau$}
\STATE  $(d_1^*, d_2^*)\sim p(d_1, d_2)\propto{L}_{(d_1, d_2)}(\square), {(d_1,d_2)\in\mathcal{D}}$
\STATE $\theta^*\sim p(\theta)\propto |\pmb{l}_{\Pi_{d_1^*, d_2^*}(\square)}(\theta)|, \theta\in(0, \pi]$, Sample $\pmb{u}$ uniformly on $\pmb{l}_{\Pi_{d_1^*, d_2^*}(\square)}(\theta^*)$
\STATE Form cutting hyperplane $H\left((d_1^*, d_2^*), \theta^*, \pmb{u}\right)$ and cut $\square$ into two sub-blocks $\square', \square''$
\STATE $\mathcal{B}'=$ BlockCut($\square', \tau-c$), $\mathcal{B}''=$ BlockCut($\square'', \tau-c$)
\STATE Return $\{\square, C=\{c, (d_1^*, d_2^*), \theta^*, \pmb{u}\}, \mathcal{B}', \mathcal{B}''\}$
\ELSE
\STATE Return $\{\square, \emptyset, \emptyset, \emptyset\}$
\ENDIF
\end{algorithmic}}
\end{algorithm}

\section{Online BSP-Forests}

We have an observed set of $N$ labelled datapoints $\{(\pmb{x}_n, y_n)\}_{n=1}^N\in\mathbb{R}^d\times \mathbb{R}$ which arrive over time, where $\pmb{x}_i$ is  a $d$-dimensional feature vector and $y_i$ is the corresponding label. We have an additional set of feature data for predictive testing.
The goal is to predict the unknown labels of the testing data, based on their feature data and the observed  training data $\{(\pmb{x}_n, y_n)\}_{n=1}^N$. 

Similar to the standard random forest~\cite{breiman2000some,biau2008consistency}, which assumes the partition is generated independently of the data labels, the online BSP-Forest does not use the data labels, but considers the partition structure within the convex hulls spanned by the arrival feature data $\pmb{x}$. Each node in the BSP-Tree records two items: the convex hull that covers its constituent datapoints, and the possible hyperplane cutting this hull. A nested set $\mathcal{V}^{(0)}$ may be used to represent the tree structure of these nodes. $\forall l\in\{0, \ldots, L\}$, $\mathcal{V}^{(l)}$ is defined as $\mathcal{V}^{(l)}=\{\Diamond^{(l)},C^{(l)}, \mathcal{V}^{(l+1)}_{\text{left}}, \mathcal{V}^{(l+1)}_{\text{right}}\}$ , where $\Diamond^{(l)}$ refers to the given convex hull, $C^{(l)}$ is $\Diamond^{(l)}$'s cutting hyperplane, and $\mathcal{V}^{(l+1)}_{\text{left}}$ ($\mathcal{V}^{(l+1)}_{\text{right}}$) represents the left (right) child node of $\mathcal{V}^{(l)}$. Since $\mathcal{V}^{(0)}$ is generated depending on the feature points $\pmb{x}_{1:N}$, it can be generated as $\mathcal{V}^{(0)}\sim\text{BSP}_{\pmb{x}}(\pmb{x}_{1:N}, \tau_{1:N})$, where $\tau_{1:N}$ is an increasing budget sequence. 

Further, the BSP-Forest, an ensemble of $M$ independent BSP-Trees $\{\mathcal{V}^{(0)}(m)\}_{m=1}^M$ sampled over the feature space, may be used for regression/classification tasks. 
In this way, the predictive label distribution on any feature data $\pmb{x}_n$ is $g(\pmb{x}_n)=\frac{1}{M}\sum_{m=1}^Mp(y|\pmb{x}_n, \mathcal{V}^{(0)}(m))$, where $p(y|\pmb{x}_n, \mathcal{V}^{(0)}(m))$ is the predictive distribution of $y$ under the $m$-th BSP-Tree. That is, the predictive distribution is a finite sample approximation to
$\mathbb{E}_{\mathcal{V}^{(0)}\sim \text{BSP}_{\pmb{x}}(\pmb{x}_{1:N}, \tau_{1:N})}[p(y|\pmb{x}, \mathcal{V}^{(0)})]$, which becomes more accurate for a larger $M$.

\begin{algorithm}[t]
\caption{\small $\text{oBSP}(\pmb{x}_{1:N}, \tau_{1:N})$ }
\label{algo:OnlineBSP}
{\small \begin{algorithmic}[1]
  \STATE Initialise $\mathcal{V}^{(0)}(m)=\{\emptyset, \emptyset, \emptyset, \emptyset\}, \forall m\in\{1, \ldots, M\}$ 
  \FOR{$n=1, \ldots, N$}
  \STATE $\mathcal{V}^{(0)}(m)= \text{cBSP}(\mathcal{V}^{(0)}(m), \pmb{x}_n, \tau_n)$, $\forall m\in\{1,\ldots, M\}$
  \ENDFOR
\end{algorithmic}}
\end{algorithm}
\begin{algorithm}[t]
\caption{\small $\text{cBSP}(\mathcal{V}^{(l)}=\{\Diamond^{(l)},C^{(l)}, \mathcal{V}^{(l+1)}_{\text{left}}, \mathcal{V}^{(l+1)}_{\text{right}}\}, \pmb{x}, \tau)$ }
\label{algo:extensionBlock}
{\small\begin{algorithmic}[1]
\STATE Define $\Diamond'$ as convex hull covering both $\Diamond^{(l)}$ and $\pmb{x}$.
\IF{${\Diamond'}$ contains $\le 3$ datapoints}
\STATE Replace $\Diamond^{(l)}$ with $\Diamond'$ in $\mathcal{V}^{(l)}$ and return $\mathcal{V}^{(l)}$
\ELSIF{$\mathcal{V}^{(l)} \text{ is a leaf node}$}
\STATE Return $\text{HullCut}(\Diamond^{(l)}, \tau)$   {\color{white!50!black}// $\Diamond^{(l)}$ is the convex hull in $\mathcal{V}^{(l)}$}
\ENDIF
\STATE Sample a cost variable $c'\sim\text{Exp}(\lambda)$, where $\lambda=\sum_{(d_1,d_2)\in{\mathcal D}}[L_{(d_1, d_2)}(\Diamond')-L_{(d_1, d_2)}(\Diamond^{(l)})]$
\IF{$c'<c^{(l)}$} {\color{white!50!black}// $c^{(l)}$ is the cut cost in $C^{(l)}$}
\STATE Generate cut $C'$ in $\Diamond'$ , with $C'$ not crossing into $\Diamond^{(l)}$
\STATE Increase all the level index ($l$) in $\mathcal{V}^{(l)}$ by $1$, return $\mathcal{V}^{(l)}=\{\Diamond', C', \mathcal{V}^{(l+1)}, \{\pmb{x}, \emptyset, \emptyset, \emptyset\}\}$ 
\ELSE
\IF{$\pmb{x}\notin \Diamond^{(l)}$} 
\STATE Extend current cut $C^{(l)}$ to $\Diamond'$ and use $\Diamond'$ to replace $\Diamond^{(l)}$
\ENDIF
\STATE  cBSP($\mathcal{V}^{(l)}_{\text{left}}, \pmb{x}, \tau-c^{(l)}$) {\color{white!50!black}// assume that $\pmb{x}$ belongs to the left side of $C^{(l)}$}
\ENDIF
\end{algorithmic}}
\end{algorithm}
\begin{algorithm}[t]
\caption{\small HullCut~($\Diamond, \tau$)} 
\label{algo:CutConvexHull}
{\small \begin{algorithmic}[1]
\FOR{$(d_1, d_2) \in \mathcal{D}$ }
    \STATE Extract dimensions $(d_1, d_2)$ of $\{\pmb{x}_i\}_{i}\in \Diamond$  to obtain $\{x_{i,d_1}, x_{i,d_2}\}_{i}$
    \STATE Calculate the convex hull on $\{x_{i,d_1}, x_{i,d_2}\}_{i}$, denoted by $\Diamond_{(d_1,d_2)}$
\ENDFOR
\STATE Sample cost  { $c\sim\mbox{Exp}(\sum_{(d_1,d_2)\in\mathcal{D}}L(\Diamond_{(d_1,d_2)}))$}
\IF{$c<\tau$}
\STATE Sample $(d^*_1,d^*_2)$ from $\mathcal{D}$ in proportion to $\{L(\Diamond_{(1,2)}),\ldots, L(\Diamond_{(d-1,d)})\}$
\STATE Sample $\theta, \pmb{u}$ on the projection of the convex hull $\Diamond_{(d^*_1,d^*_2)}$ 
\STATE Use the new cutting hyperplane to generate two new convex hulls $\Diamond', \Diamond''$
\STATE $\mathcal{V}'=\text{HullCut}(\Diamond', \tau-c$), $\mathcal{V}''=\text{HullCut}(\Diamond'', \tau-c$)
\STATE Return $\{\Diamond, C = \{c, (d_1^*, d_2^*), \theta, \pmb{u}\}, \mathcal{V}', \mathcal{V}''\}$
\ELSE
\STATE Return $\{\Diamond, \emptyset, \emptyset, \emptyset\}$
\ENDIF
\end{algorithmic}}
\end{algorithm}

\begin{figure*}[t]
\centering
\includegraphics[width = 0.85 \textwidth]{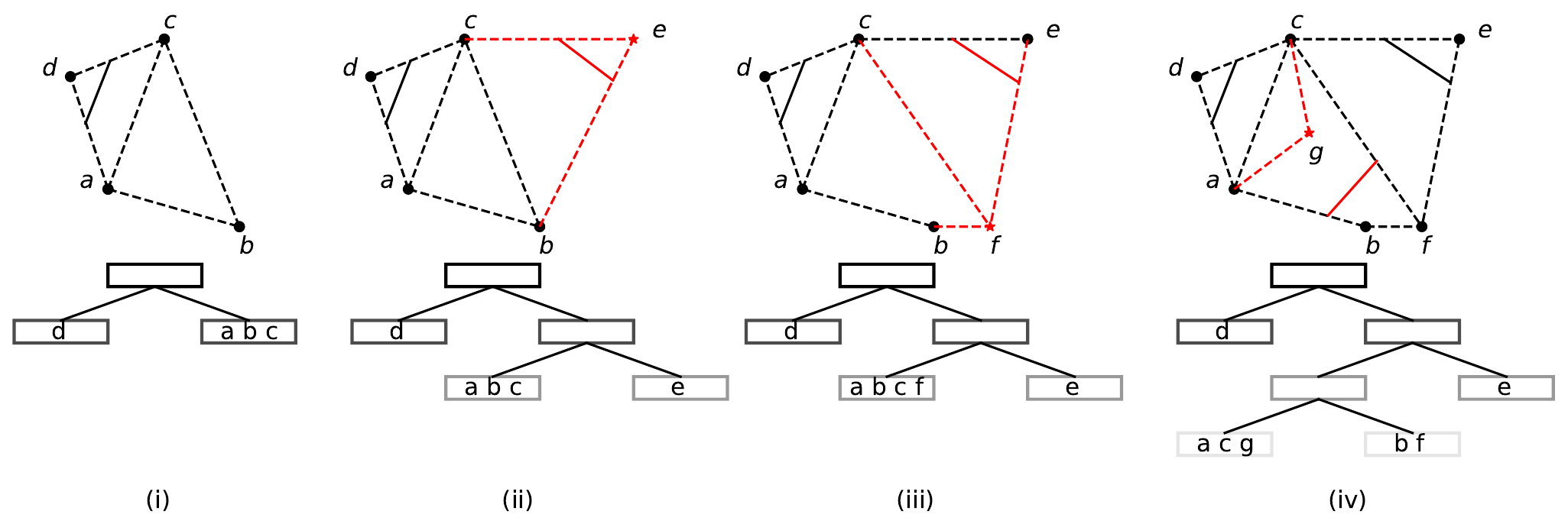}
\caption{{Steps (i)--(iv): A $2$-dimensional illustration of a sequence of online BSP-Tree updates on arrival of new datapoints (denoted by {$\star$}). Dashed lines indicate edges of convex hulls and solid lines represent cutting lines. Black and red colouring respectively denotes existing and newly active components at the current step. The tree structure (bottom figure) can be identified through the subset relations between the convex hulls.}}
\label{fig:Splitting_rule}
\end{figure*}

Algorithm~\ref{algo:OnlineBSP}~(oBSP, along with Algorithms \ref{algo:extensionBlock} and \ref{algo:CutConvexHull}) specifies the strategy of building the online BSP-Forest in detail. Starting from a set of empty nodes, the online BSP-Forest updates each BSP-Tree once new training data $\pmb{x}_{n+1}$ arrives. From the BSP-Tree constructed from the previous $n$ datapoints,
this procedure then defines a {\it conditional} BSP-Tree Process~(cBSP) such that,
following the distributional invariance self-consistency property  between the domain and its subdomain, and the Markov property in the time line, the new partition $\mathcal{V}^{'(0)}$ based on up to $n+1$ datapoints is constructed as
\begin{align*}
\mathcal{V}^{(0)} \sim \text{BSP}_{\pmb{x}}(\pmb{x}_{1:n}, \tau_{1:n}),\mathcal{V}^{'(0)} \sim \text{cBSP}(\mathcal{V}^{(0)},\pmb{x}_{n+1},\tau_{n+1}).
\end{align*}
It then follows that the new partition $\mathcal{V}^{'(0)}$ has the same distribution as
\begin{align*}
\mathcal{V}^{'(0)} \sim \text{BSP}_{\pmb{x}}(\pmb{x}_{1:n+1}, \tau_{1:n+1}).
\end{align*}
That is, the distribution of an online BSP-Tree trained (using the cBSP) on sequentially arriving data, is the same as that of a BSP-Tree~($\text{BSP}_{\pmb{x}}$) directly trained on the full dataset. Different shuffling arrangement on the datapoints would not affect the partition result.

\paragraph{Conditional BSP-Tree Process~(cBSP):} Algorithm~\ref{algo:extensionBlock} describes the procedural detail of the cBSP when incorporating a new datapoint $\pmb{x}$. Based on the location of $\pmb{x}$ and the (possibly) generated cut cost, each BSP-Tree might update the tree structure,
enlarge the hull coverage,
or implement a new cut in the tree nodes.
Figure~\ref{fig:Splitting_rule} illustrates each of these cases:
\begin{itemize}
\item[({\it i})] Points $a, b,c,d$ form a $2$-level BSP-Tree~(Lines~1--6, Algorithm~\ref{algo:extensionBlock}), where $a,b,c$ are in the right level-$1$ convex hull and $d$ is in the left one. 
\item[({\it ii})] A new point $e$ arrives outside the level-$0$ convex hull. A new cost value $c'$ is generated (Line~7, Algorithm~\ref{algo:extensionBlock}). This new cost $c'$ is compared with the cost $c$ associated with the cut-line in the level-$0$ convex hull. If $c'<c$ (Line~8, Algorithm~\ref{algo:extensionBlock}), a cut-line not crossing into the level-$0$ convex hull is generated (see Section 1 of the supplementary material), and a new level-$0$ convex hull is formed from the previous level-$0$ hull and the point $e$. All previously existing level-$l$ convex hulls in this branch are demoted to level-($l+1$) as a consequence of creating the new right level-$1$ convex hull. 
\item[({\it iii})] A new point $f$ arrives outside the level-$0$ convex hull and the new generated cost value $c'$ is larger than the current cost value $c$. The cut-line (red solid line) allocates the new point to the rightmost level-$2$ convex hull. Since this is a leaf node (no cut-lines inside), it forms a new level-$2$ convex hull including the new point (Lines~13 and 15, Algorithm~\ref{algo:extensionBlock}).
\item[({\it iv})] A new point $g$ arrives inside the level-$0$ convex hull. Different levels of cutting lines are repeatedly compared to $g$ until $g$ is compared with a leaf node~(Line~15, Algorithm~\ref{algo:extensionBlock}). Since a larger budget $\tau_7$~(i.e. the budget for $7$ datapoints) is provided, new cuts may still be generated within the leaf node.
\end{itemize}

{It is noted that Lines $2\sim3$ in Algorithm~\ref{algo:extensionBlock} restrict a block to be cut only if it contains more than $3$ datapoints. If we have set an overly-large budget sequence, which prefers to generate small and trivial blocks, this restriction can help the algorithm avoid the overfitting issue. }

\paragraph{Benefits of a convex hull representation:} 

Cutting the full $d$-dimensional polyhedron generated from a series of cutting hyperplanes on the original domain is a challenging  task. This is because 
completely indexing of this polyhedron requires extensive geometric manipulation, including calculating the intersection of multiple hyperplanes, determining if two hyperplanes intersect in the presence of  other hyperplanes, and listing all  vertices of the polyhedron.

Instead, implementing the hyperplane cut on the convex hull greatly simplifies these issues. The projection of the convex hull on any two dimensions can be simply obtained by extracting the elements of $\{\pmb{x}_i\}_{i=1}^n$ in these dimensions, and then using a conventional convex hull detection algorithm~(such as the Graham Scan algorithm~\cite{Graham1972} with a computational complexity of $\mathcal{O}(n\log n)$). Algorithm~\ref{algo:CutConvexHull} describes the way of generating the hyperplane cut in a convex hull.

The self-consistency property~(Theorem~\ref{fact_1}) of the 
BSP-Tree Process enables us conveniently and immediately have that: 
\begin{corollary}
The hyperplane restricted to the convex hull has the same distribution as if it was first generated on the ``full'' polyhedron, and then restricted to the convex hull. 
\end{corollary}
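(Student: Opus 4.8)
The plan is to obtain the corollary as an immediate specialisation of the Self-Consistency property (Theorem~\ref{fact_1}), taking the convex subdomain $\triangle$ to be the convex hull $\Diamond$ and the convex domain $\square$ to be the full polyhedron-shaped block. The one thing to verify up front is that $\Diamond$ is a legitimate choice of $\triangle$: since every stored datapoint lies inside the block $\square$ and $\square$ is convex, the convex hull $\Diamond$ of those datapoints satisfies $\Diamond\subseteq\square$ and is itself convex, so it qualifies as a convex subdomain.

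First I would recall precisely what the two sampling schemes produce. Generating the cut directly on $\Diamond$ is exactly one invocation of HullCut (Algorithm~\ref{algo:CutConvexHull}), whose cost, dimension pair, angle, and position are driven by the projected perimeters $L(\Diamond_{(d_1,d_2)})$. Generating the cut on the full polyhedron and then restricting is one cut step of BlockCut (Algorithm~\ref{algo:BSP_partition_one_block}), driven instead by the projected perimeters $L_{(d_1,d_2)}(\square)$, followed by intersecting the resulting hyperplane with $\Diamond$. These marginal laws differ a priori, precisely because $L(\Diamond_{(d_1,d_2)})\le L_{(d_1,d_2)}(\square)$; the reconciliation is that a cut sampled on $\square$ may miss $\Diamond$ altogether and leave no trace there, and the Exponential cost together with the perimeter-proportional choice of dimension pair and angle thins exactly the right fraction of cuts so that, conditioned on leaving a trace, that trace matches the HullCut law on $\Diamond$.

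The key step is that Theorem~\ref{fact_1} already establishes this reconciliation at the level of the \emph{entire} partition: the restriction to $\Diamond$ of a BSP partition on $\square$ has the same distribution as a BSP partition sampled directly on $\Diamond$. I would then marginalise both sides to the first cutting hyperplane, using that the restriction operator commutes with reading off the topmost cut (a cut of $\square$ that meets $\Diamond$ restricts to the topmost cut of $\Diamond$, while one that misses $\Diamond$ simply does not appear in the restricted partition). This yields the stated equality of single-hyperplane laws with no further computation.

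I expect the main obstacle to be the bookkeeping in this marginalisation, namely verifying carefully that ``restrict the whole partition, then extract its first cut'' coincides with ``extract the first cut of $\square$, then restrict it to $\Diamond$.'' This is where the matching of the Exponential cost thinning on $\square$ against the cost distribution on $\Diamond$ must be made explicit, so that cuts of $\square$ not meeting $\Diamond$ are correctly absorbed and the conditional trace law is identified with the HullCut law. Once this commutation is justified, the corollary follows directly from Theorem~\ref{fact_1}.
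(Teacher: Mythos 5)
Your proposal is correct and follows the same route as the paper: the paper derives this corollary as an immediate consequence of the self-consistency property (Theorem~\ref{fact_1}), taking the convex subdomain $\triangle$ to be the data-spanned convex hull inside the full polyhedral block, exactly as you do. The additional marginalisation-to-the-first-cut bookkeeping you flag is a reasonable point of care, but the paper treats the statement as following directly from Theorem~\ref{fact_1} without further argument.
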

That is, the partitions of the BSP-Tree Process on the full space and on the data spanned convex hull will produce identical inferential outcomes.

{\paragraph{Computational complexity:}
In the optimal case and assuming the BSP-Tree formed by the streaming data is a balanced tree (which keeps the height of trees in a complexity of $\log N$), the computational complexity of a single BSP-Tree is $\mathcal{O}(N\log N\cdot d^2)$. When it is of the same magnitude as the  $\mathcal{O}(N\log N\cdot d)$ of the Mondrian Forest~\cite{LakRoyTeh2014a,lakshminarayanan2016mondrian} ~(also in the optimal case) in terms of $N$, the additional $\mathcal{O}(d)$ scaling factor is the price of more efficient cuts. In this perspective, this online BSP-Forest might be more suitable for low-dimensional regression/classification problems. In practice, we may alleviate this additional computational cost by using less hierarchical structure (through settings of smaller budget $\tau$, see experiments in Section~\ref{exp:cut_efficiency}). The convex hull calculation is $\mathcal{O}(N\log N)$, which does not influence the general computational complexity. 

In the worst case, when the height of the BSP-Tree equals to the number of datapoints~($N$) and each level has one split only, the computational complexity of the online BSP-Forest turns into $\mathcal{O}(N^2d^2)$~(which is the same as the Mondrian Forest in terms of $N$). This complexity is still smaller than that of the batch version of a Random Forest~($\mathcal{O}(d\cdot N^2\log N)$~\cite{LakRoyTeh2014a,lakshminarayanan2016mondrian}) in terms of $N$. It is also noted that the online BSP-Forest can be easily made to run in parallel as its component trees are updated in an independent manner.

When a new sample arrives, the computational complexity depends on the existing tree structure: if the BSP-tree is a balanced tree, the complexity is $\mathcal{O}(d^2\log N)$; if the height of the BSP-tree equals to the number of existing samples~(i.e., the worst case), the complexity is $\mathcal{O}(d^2N)$.

In the memory usage part, the online BSP-Forest requires the same amount of $\mathcal{O}(Nd)$ as the Mondrian Forest. That is to say, each leaf node of the BSP-Tree would store all its belonging datapoints for future splitting. }

\paragraph{Empirical label distribution:} For the label distribution  $p(y|\pmb{x}, \mathcal{V}^{(0)})$, we use the empirical distribution of the labels from the training data to represent the label distribution on each of the leaf nodes. For classification tasks, we use a majority vote over the label predictions of each tree in the online BSP-Forest; for regression tasks, we use the mean value of the trees' label predictions at the given point. According to~\cite{breiman2000some,biau2008consistency,consistencyMondrianforest}, this simple empirical approach usually performs competitively compared to constructing complex Bayesian models on the label distribution. 

\paragraph{Universal consistency:}
As a larger value of the budget $\tau_n$ indicates a deeper tree structure and more complex partitions, the increasing budget sequence $\tau_{1:N}(\tau_{n+1}\ge\tau_{n})$ indicates that the online BSP-Forest will produce more refined partitions with increasing amounts of data. This is consistent with the intuition behind, and a central tenet of, Bayesian nonparametrics. Similar observations~(for Mondrian Forest only) has also been obtained in
\cite{consistencyMondrianforest}. Further, it shows that such an increasing budget sequence is the key to guarantee the algorithm's performance converge to the optimal for all distributions over the labels and features.

 More formally, let $\ell_n=P(g_n(\pmb{x})\neq y|\{(\pmb{x}_i, y_i)\}_{i=1}^n)$ denote the error probability of these tree-structured online algorithm $g_n(\cdot)$. Using $\ell^*=\min_{g_n} \ell_n$ to denote the Bayesian error rate, which is generated by the optimal classifier. As the increasing budget with $n$ results in finer partitions (i.e. more blocks and ``smaller'' block sizes), these online algorithms ensure there are a sufficiently large number of datapoints in each block, and its performance approaches the Bayesian error. 

In order to discuss this ``universal consistency'' in the online BSP-Forest setting, we first investigate particular properties of an oblique line slice of the BSP-Tree Process.
\begin{lemma} \label{lemma:dim_1_poisson_process}
(Oblique line slice) For any oblique line that crosses into the domain of a BSP-Tree Process with budget $\tau$, its intersection points with the partition forms a homogeneous Poisson Process with intensity $2\tau$.
\end{lemma}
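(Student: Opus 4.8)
The plan is to read the cost coordinate of the BSP-Tree Process as a continuous ``time'' and to track how intersections with a fixed line accumulate as this time runs from $0$ to $\tau$. Concretely, viewing Algorithm~\ref{algo:BSP_partition_one_block} as a Markov jump process in budget-time, every currently live block $\square$ waits an $\mathrm{Exp}\!\big(\sum_{(d_1,d_2)\in\mathcal{D}}L_{(d_1,d_2)}(\square)\big)$ time before emitting its next cut, and by Lines~3--4 a cut emitted in the sampled pair $(d_1^*,d_2^*)$ is a \emph{uniform random line} hitting the projected polygon $\Pi_{d_1^*,d_2^*}(\square)$, drawn from the integral-geometric line measure $\mathrm{d}\theta\,\mathrm{d}\pmb{u}$ normalised by $L_{(d_1^*,d_2^*)}(\square)$. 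Fixing the oblique line, which I call $\gamma$ and parametrise by arc length $s$, I would track the point process $N$ of its intersections with the cuts present at time $\tau$, and identify it through its void probabilities and its independence over disjoint sub-intervals as a homogeneous Poisson process of rate $2\tau$.

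The engine of the argument is a Cauchy/Crofton crossing identity in the plane: the line measure of all lines meeting a convex body $K$ equals its perimeter, whereas the measure of lines meeting a fixed interior chord of length $a$ equals $2a$; hence a uniform cut of $K$ crosses that chord with probability $2a/\mathrm{perim}(K)$, and (since the induced density of the crossing point is $\propto|\cos(\beta-\theta)|$, independent of position) its crossing point is uniform along the chord. A cut generated in block $\square$ from the pair $(d_1^*,d_2^*)$ meets $\gamma$ exactly when its planar line meets the projected chord $\Pi_{d_1^*,d_2^*}(\gamma\cap\square)$, so the whole event reduces to this planar identity. Multiplying the cut rate $\sum_{(d_1,d_2)}L_{(d_1,d_2)}(\square)$, the pair-selection probability $L_{(d_1^*,d_2^*)}(\square)/\sum_{(d_1,d_2)}L_{(d_1,d_2)}(\square)$, and the crossing probability makes every perimeter factor cancel, leaving a rate proportional to the length of the chord $\gamma\cap\square$ and independent of the shape of $\square$; in the planar case the constant is exactly $2$, and the general-$d$ rate is obtained by aggregating the analogous contributions over the projections in which cuts are generated.

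With the instantaneous rate in hand I would compute void probabilities. For a fixed sub-interval $I\subset\gamma$, at every budget-time the blocks currently covering $I$ contribute a total hazard of a cut crossing $I$ equal to $2|I|$, irrespective of how $I$ has already been split by earlier cuts, because the per-block rate is proportional to the covered chord length and these lengths tile $I$. Since this hazard is constant in time, integrating over $[0,\tau]$ gives $\Pr(\text{no intersection in } I)=e^{-2\tau|I|}$, which is precisely the void probability of a homogeneous Poisson process of intensity $2\tau$. To promote this to the full Poisson law I would invoke the branching Markov structure: by the Markov property in budget-time, once two disjoint sub-intervals lie in distinct blocks the process evolves through \emph{independent} recursive BSP partitions (Line~6 of Algorithm~\ref{algo:BSP_partition_one_block}), so the counts on disjoint intervals are independent; together with the stationary rate and the uniform placement of each new point, this characterises $N$ as the claimed homogeneous Poisson process.

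The main obstacle is this last step rather than the rate computation. Establishing \emph{exact} Poissonity — that the finite-dimensional distributions, and not merely the mean measure and void probabilities, are those of a Poisson process — requires care because an interval of $\gamma$ may straddle block boundaries created at earlier times, so the needed independence must be extracted from the recursive construction rather than assumed; here the self-consistency of Theorem~\ref{fact_1}, which guarantees that restriction commutes with the partition law, is the natural device for splicing the independent sub-processes together. A secondary technical point is the bookkeeping for $d>2$: one must verify that the projected-chord reduction and the perimeter cancellation go through projection-by-projection and aggregate correctly, and that degenerate configurations (a cut parallel to $\gamma$, or $\gamma$ meeting a block only along its boundary) form a null set and can be discarded.
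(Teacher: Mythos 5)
Your proposal is correct in substance and reaches the same constant $2\tau$ as the paper, but by a genuinely different route. The paper's proof first invokes self-consistency (Theorem~\ref{fact_1}) to restrict the whole process to the line as a degenerate one-dimensional convex subdomain, and then identifies the cut rate of that induced process by a limiting argument: it realises the segment as a thin obtuse triangle with apex at distance $\epsilon$ from the base and lets $\epsilon\to 0$, so the exponential rate (the perimeter) tends to $2L$; uniformity of the split point and the recursive structure then give the Poisson process via the characterisation in Theorem~1.10 of Balog's notes. You instead stay in the ambient blocks and never define a BSP process on a degenerate domain: you compute the instantaneous hazard of a cut crossing the chord $\gamma\cap\square$ directly by Cauchy--Crofton (rate of cuts $\times$ crossing probability $2a/L(\square)$, with the perimeter cancelling), observe that the hazards of the blocks tiling an interval $I$ sum to the constant $2|I|$, and conclude via void probabilities plus the branching independence of Line~6. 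The two arguments are two faces of the same integral-geometric fact --- the paper's $2L$ is exactly Cauchy's formula for the degenerate convex body --- but your version is more self-contained and makes explicit the independent-increments step that the paper delegates to a citation. Your worry about exact Poissonity is legitimate, and your proposed resolution (recursive independence of the sub-partitions once the intervals lie in distinct blocks, spliced by self-consistency) is the same device the cited theorem formalises.

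One caution on your ``secondary technical point'': for $d\ge 3$ the aggregation over projections does \emph{not} reduce to a formality. A cut generated in the pair $(d_1,d_2)$ crosses $\gamma$ exactly when its planar line meets the projected chord $\Pi_{d_1,d_2}(\gamma\cap\square)$, whose length is $a\sqrt{v_{d_1}^2+v_{d_2}^2}$ for a chord of length $a$ with direction $v$; summing the per-pair rates gives $2a\sum_{(d_1,d_2)}\sqrt{v_{d_1}^2+v_{d_2}^2}$, which equals $2a$ only when $d=2$ and is direction-dependent otherwise. So the clean constant $2\tau$ does not fall out of the projection bookkeeping in higher dimensions without further argument. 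Be aware, however, that the paper's own proof is silent on this: the thin-triangle computation is purely planar, so your approach merely makes visible a gap that the published argument also leaves open rather than introducing a new one.
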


%
Lemma~\ref{lemma:dim_1_poisson_process} can enable us to describe basic characteristics~(e.g. size, number) of the blocks of the BSP-Tree. Based on Lemma~\ref{lemma:dim_1_poisson_process}, the theoretical work of \cite{consistencyMondrianforest} and Theorem 6.1 of~\cite{devroye2013probabilistic} and set restrictions on the budget sequence, we can obtain the resulting universal consistency for the online BSP-Forest as:
\begin{theorem}
If $\lim_{n\to\infty}\tau_n\to\infty$ and $\lim_{n\to\infty}\frac{(\tau_n)^d}{n}\to 0$, then for classification tasks, we have $\lim_{n\to\infty}\mathbb{E}[\ell_n]\to \ell^*$.
\end{theorem}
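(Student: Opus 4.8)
The plan is to cast the online BSP-Forest classifier as a data-driven \emph{partition rule} and then verify the two classical sufficient conditions for universal consistency of such rules from Theorem~6.1 of~\cite{devroye2013probabilistic}, adapting the argument for randomized, label-independent partitions used for Mondrian forests in~\cite{consistencyMondrianforest}. Write $A_n(\pmb{x})$ for the leaf cell of a single BSP-Tree containing $\pmb{x}$ after $n$ points, and $N_n(\pmb{x})$ for the number of training points in that cell. Because the partition is generated from the features $\pmb{x}_{1:n}$ only (never the labels) and the leaf prediction is a majority vote of the empirical labels, the rule is exactly of the form covered by the partition framework. It suffices to prove, for a single tree, that (A) $\mathrm{diam}(A_n(\pmb{x}))\to 0$ in probability and (B) $N_n(\pmb{x})\to\infty$ in probability, under the joint law of $(\pmb{x},y)$ and the partition randomness; the forest prediction is an average over $M$ independent trees, and since a convex combination (or majority vote) of consistent rules is consistent, establishing consistency tree-by-tree yields $\mathbb{E}[\ell_n]\to\ell^*$. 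As $\ell_n\in[0,1]$, convergence in probability of the conditional risk upgrades to convergence of $\mathbb{E}[\ell_n]$ by bounded convergence.

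For condition (A) I would use the first hypothesis $\tau_n\to\infty$ together with Lemma~\ref{lemma:dim_1_poisson_process}. Fix a direction $\theta\in S^{d-1}$ and consider the line through $\pmb{x}$ in direction $\theta$; by Lemma~\ref{lemma:dim_1_poisson_process} the cutting hyperplanes meet this line in a homogeneous Poisson process of intensity $2\tau_n$, so the distance from $\pmb{x}$ to the first cut in direction $\theta$ is stochastically dominated by an $\mathrm{Exp}(2\tau_n)$ variable. Since each leaf cell is convex and contains $\pmb{x}$, every point of the cell is reached from $\pmb{x}$ by a segment lying inside the cell, so the cell is contained in the ball about $\pmb{x}$ whose radius is the supremum over $\theta$ of these first-exit distances; hence $\mathrm{diam}(A_n(\pmb{x}))$ is at most twice that supremum. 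I would then control a fixed-$\theta$ term by its $\mathrm{Exp}(2\tau_n)$ tail and pass to the supremum by a $\delta$-net on $S^{d-1}$ with a union bound, whose cardinality depends on $d$ only, so that $\mathrm{diam}(A_n(\pmb{x}))=O_{\mathbb{P}}(1/\tau_n)\to 0$.

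For condition (B) I would use the second hypothesis $(\tau_n)^d/n\to 0$. Lemma~\ref{lemma:dim_1_poisson_process} gives a per-direction cut intensity of $2\tau_n$, from which the expected number of leaf cells of a budget-$\tau_n$ BSP-Tree on the bounded feature domain scales as $\Theta((\tau_n)^d)$, mirroring the $(1+\lambda)^d$ cell count of a lifetime-$\lambda$ Mondrian partition. Writing $K_n$ for this cell count, the hypothesis gives $\mathbb{E}[K_n]/n\to 0$, i.e.\ the partition is asymptotically much coarser than the sample. Combining this coarseness estimate with a control on the combinatorial richness of the family of achievable partitions (the leaves are intersections of $O(\tau_n)$ half-spaces, whose VC dimension is polynomial in $d$), a Lugosi--Nobel/Devroye-type argument then forces $N_n(\pmb{x})\to\infty$ in probability for \emph{every} feature distribution, with no density or continuity assumption, which is what ``universal'' requires.

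The main obstacle is upgrading the single-line statement of Lemma~\ref{lemma:dim_1_poisson_process} to a genuinely $d$-dimensional, uniform-over-directions control of the oblique cells in step (A): unlike the axis-aligned Mondrian case, a BSP leaf is an arbitrarily oriented convex polytope, so the first-exit distances in different directions are strongly dependent and the naive union bound over $S^{d-1}$ must be handled with care (e.g.\ via a mean-width/Cauchy-formula computation, or a net whose resolution is tied to $\tau_n$). A secondary difficulty is the data-dependence of the partition in step (B): because consistency must hold for all distributions, one cannot compute expected cell occupancy from a density and must instead argue through the partition's complexity and the coarseness estimate $\mathbb{E}[K_n]=o(n)$. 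Once these two geometric and complexity points are settled, the conclusion $\mathbb{E}[\ell_n]\to\ell^*$ follows from the partition-consistency theorem.
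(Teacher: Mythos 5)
Your overall route coincides with the paper's: both reduce to Theorem~6.1 of \cite{devroye2013probabilistic} and verify, for a single label-independent tree, that the leaf diameter vanishes and the leaf occupancy diverges in probability, using the oblique-line-slice Poisson lemma for the first condition and an $O(\tau_n^d)$ cell-count bound for the second. The differences are in how the two conditions are discharged, and in both places the paper's execution is simpler than what you propose. For the diameter, the paper does not take a supremum over a net of directions on $S^{d-1}$ (the obstacle you rightly flag, since the radial function of a convex cell is not uniformly controlled by its values on a net); instead it rotates coordinates so that one basis vector points in the (random) direction achieving the diameter, applies the line-slice lemma to each of the $d$ orthogonal one-dimensional slices through $\pmb{x}$ --- each extent being dominated by a sum of two independent $\mathrm{Exp}(\tau)$ variables, i.e.\ a $\mathrm{Gamma}(2,\tau)$ --- and sums, giving $\mathbb{E}[D_{\tau}(\pmb{x})^2]\le 4d/\tau^2$. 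For the occupancy, no VC-dimension or Lugosi--Nobel complexity argument is needed: conditionally on the partition and on the unordered sample, exchangeability makes the probability that $\pmb{x}$ lands in a given leaf equal to that leaf's occupancy divided by $n+1$, whence $\mathbb{P}(N_n(\pmb{x})\le t)\le t(\mathbb{E}[K_{\tau_n}]+1)/(n+1)\to 0$; that is the whole argument, valid for every feature distribution. Finally, the cell-count scaling you assert by analogy with the Mondrian case is actually proved in the paper (its Lemma~3) by dominating the leaf count with a Yule process of rate $L(\square)$ and then covering $[0,1]^d$ by $\lceil\tau\rceil^d$ subcubes via self-consistency, yielding $\mathbb{E}[K_\tau]\le(1+\tau)^d e^{d(d-1)}$; you would need to supply this estimate rather than assume it.
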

This universal consistency can be applied on any format of dataset as the result does not restrict the distribution of $\pmb{x}$ or the prediction function $g_n(\cdot)$. 

Using the same proof techniques of~\cite{consistencyMondrianforest}, we can obtain minimax convergence rate for the regression trees task as~(the result for the classification trees can be obtained in a similar way):
\begin{theorem}
Suppose that the label $y$ is generated by a Lipschitz function $g(\cdot): [0, 1]^d\to \mathbb{R}$ on the cubic space $[0, 1]^d$. Let $\widehat{g}_n(\cdot)$ be an online BSP-Forest algorithm and that the budget sequence satisfies $\tau_n=\mathcal{O}(n^{1/(d+2)})$. Then, the following upper bound holds:
\begin{align}
\mathbb{E}_{\pmb{x}}\left[(g(\pmb{x})-\widehat{g}_n(\pmb{x}))^2\right]=\mathcal{O}(n^{-2/(d+2)})
\end{align}
for sufficiently large $n$, which corresponds to the minimax rate over the set of Lipschitz functions.
\end{theorem}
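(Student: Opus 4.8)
The plan is to carry out the standard bias--variance decomposition for partition-based regression estimators, in the spirit of Theorem~6.1 of~\cite{devroye2013probabilistic} and the Mondrian analysis of~\cite{consistencyMondrianforest}, with Lemma~\ref{lemma:dim_1_poisson_process} supplying the geometric control specific to the oblique BSP cuts. Write $y=g(\pmb{x})+\varepsilon$ with $\mathbb{E}[\varepsilon]=0$ and bounded variance $\sigma^2$, let $A_n(\pmb{x})$ be the leaf cell containing $\pmb{x}$ in a single tree, and let $N_n(\pmb{x})$ be the number of training points it contains, so that $\widehat{g}_n(\pmb{x})$ is the average of the $y_i$ over $\pmb{x}_i\in A_n(\pmb{x})$. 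Introducing the noiseless cell average $\tilde g_n(\pmb{x})=\frac{1}{N_n(\pmb{x})}\sum_{\pmb{x}_i\in A_n(\pmb{x})}g(\pmb{x}_i)$, I would split the pointwise risk as
\begin{align}
\mathbb{E}\big[(g(\pmb{x})-\widehat{g}_n(\pmb{x}))^2\big]\le 2\,\mathbb{E}\big[(g(\pmb{x})-\tilde g_n(\pmb{x}))^2\big]+2\,\mathbb{E}\big[(\tilde g_n(\pmb{x})-\widehat{g}_n(\pmb{x}))^2\big],
\end{align}
the first being the approximation (bias) term and the second the estimation (variance) term; integrating over $\pmb{x}$ and, for the forest, using Jensen's inequality to bound the averaged estimator by the single-tree risk keeps the same rate.

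For the bias term, the Lipschitz assumption gives $|g(\pmb{x})-g(\pmb{x}_i)|\le L\,\|\pmb{x}-\pmb{x}_i\|\le L\cdot\mathrm{diam}(A_n(\pmb{x}))$ for every $\pmb{x}_i$ in the cell, so this term is bounded by $L^2\,\mathbb{E}[\mathrm{diam}(A_n(\pmb{x}))^2]$. The key step is then to show $\mathbb{E}[\mathrm{diam}(A_n(\pmb{x}))^2]=\mathcal{O}(\tau_n^{-2})$. Here Lemma~\ref{lemma:dim_1_poisson_process} is the central tool: along any oblique line through $\pmb{x}$ the partition cuts form a homogeneous Poisson process of intensity $2\tau_n$, so the distance from $\pmb{x}$ to the nearest cut in any fixed direction is stochastically dominated by an exponential variable with rate proportional to $\tau_n$, giving a per-direction extent of order $\tau_n^{-1}$. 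Controlling the full diameter then requires combining such line-slice bounds over a covering set of directions and showing the cell extent is simultaneously $\mathcal{O}(\tau_n^{-1})$ in all directions with the relevant moments, exactly as the coordinatewise argument is assembled in~\cite{consistencyMondrianforest}.

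For the variance term, conditionally on the partition and the feature locations the noise averaging gives $\mathbb{E}[(\tilde g_n(\pmb{x})-\widehat{g}_n(\pmb{x}))^2\mid\cdot]=\sigma^2/N_n(\pmb{x})$, so it suffices to control $\mathbb{E}[1/N_n(\pmb{x})]$. Using Lemma~\ref{lemma:dim_1_poisson_process} in each of the $d$ directions, the expected number of leaf cells is of order $\tau_n^d$, so with $n$ points spread across the cells the typical occupancy $N_n(\pmb{x})$ is of order $n/\tau_n^d$; the hypothesis $\tau_n^d/n\to 0$ (together with the algorithmic rule that cells with $\le 3$ points are not cut) guarantees $N_n(\pmb{x})\to\infty$ and yields $\mathbb{E}[1/N_n(\pmb{x})]=\mathcal{O}(\tau_n^d/n)$.

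Combining the two bounds gives $\mathbb{E}_{\pmb{x}}[(g(\pmb{x})-\widehat{g}_n(\pmb{x}))^2]=\mathcal{O}(\tau_n^{-2}+\tau_n^d/n)$, and balancing the two terms sets $\tau_n^{-2}\asymp \tau_n^d/n$, i.e. $\tau_n\asymp n^{1/(d+2)}$, for which both terms equal $\mathcal{O}(n^{-2/(d+2)})$; this is precisely the minimax rate for the class of Lipschitz functions on $[0,1]^d$, establishing optimality. The main obstacle I anticipate is the bias step: unlike the axis-aligned Mondrian cells, which are boxes whose side lengths decouple across coordinates, the oblique BSP cells are general convex polytopes, so passing from the one-dimensional Poisson-slice control of Lemma~\ref{lemma:dim_1_poisson_process} to a uniform $\mathcal{O}(\tau_n^{-1})$ bound on the full cell diameter (and to the $\tau_n^d$ count of cells) is where the geometry must be handled with care, including a union/covering argument over directions and control of the tail of the diameter so that its second moment, not merely its typical size, is of the correct order.
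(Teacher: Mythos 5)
Your overall architecture matches the paper's: reduce to a single tree by Jensen/convexity, split the risk into bias and variance, bound the bias by $L^2\,\mathbb{E}[D_n(\pmb{x})^2]=\mathcal{O}(\tau_n^{-2})$ via the Lipschitz property and the line-slice Poisson control, bound the variance by a cell-count argument of order $\tau_n^d/n$, and balance at $\tau_n\asymp n^{1/(d+2)}$. Two points differ in substance. First, for the variance you center at the empirical cell average $\tilde g_n$ and reduce to controlling $\mathbb{E}[1/N_n(\pmb{x})]$; the paper instead centers at the population cell average $\bar f_n(\pmb{x})=\mathbb{E}[f(\pmb{x})\mid \pmb{x}\in A_n(\pmb{x})]$ (which makes the decomposition exactly orthogonal, no factor of $2$) and invokes Proposition~2 of~\cite{arlot2014analysis} to get $\mathbb{E}[(\bar f_n-\widehat f_n)^2]\le \frac{\mathbb{E}[K_n]+1}{n}(2\sigma^2+9\|f\|_\infty)$ directly. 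Your route is viable but incomplete as stated: $\mathbb{E}[1/N_n(\pmb{x})]$ needs an inverse-binomial-moment bound to handle nearly empty cells, which you do not supply; the paper's citation sidesteps this.

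Second, and this is the genuine gap: you claim that "using Lemma~1 in each of the $d$ directions, the expected number of leaf cells is of order $\tau_n^d$." The oblique-line-slice lemma controls the number of cuts met by a single line, not the total number of cells of the $d$-dimensional partition, and the product-over-directions heuristic does not constitute a proof for a partition into general convex polytopes. The paper proves $\mathbb{E}[K_\tau]\le(1+\tau)^d e^{d(d-1)}$ by an entirely different mechanism (its Lemma~3): the leaf count is dominated by a Yule process with rate given by the block perimeter, and the exponential dependence on $\tau$ is then tamed by covering $[0,1]^d$ with a regular grid of $\lceil\tau\rceil^d$ boxes and applying self-consistency to each box. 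You would need this (or an equivalent) argument to make the variance bound rigorous. By contrast, your worry about the bias term is overstated: no covering over a continuum of directions is required, since the paper simply rotates to an orthogonal basis with one axis along the cell's largest extent and applies the Poisson slice bound in each of the $d$ resulting directions, giving $\mathbb{E}[D_\tau(\pmb{x})^2]\le 4d/\tau^2$.
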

This result can be used for the guide of appropriately choosing the values on the budget sequence.

\section{Related Work}
Since its introduction in the early 2000s, the random forest~\cite{breiman2001random,breiman2000some} has become a state-of-the-art method for typical classification and regression tasks. The literature is too vast to provide a comprehensive but compact survey, and so we focus on its online versions. For the frequentist-styled online random forest algorithms, they were developed by~\cite{consistency_online_BF,On_Line_RF,domingos2000mining}. The first two algorithms start from empty trees, and then grow the tree with more data based on evaluating a score function for each potential split position. As this score function relates to training performance, the node splitting procedure is label dependent. However, memory usage is inefficient in these two algorithms as the scores must be computed and stored for each potential split position. The third algorithm proposes to build online decision trees using constant memory and constant time per datapoint.  These partitions are inefficient as only one feature is involved in the split procedure.
%

The Purely Random Forest~(PRF) algorithm~\cite{genuer2012variance, arlot2014analysis} assumes  tree generation is independent of data labels. When the split position is random, which is similar to the Mondrian Forest and the online BSP-Forest, the corresponding distribution on the resulting partition is not self-consistent. Further, its batch learning framework is not amenable to large-scale learning. \cite{genuer2012variance} proves that the PRF can achieve the minimax rate for estimating Lipschitz functions in the $1$-dimensional case for single trees. \cite{arlot2014analysis} extends the analysis to forest settings and shows an improved convergence rate for smooth regression functions. 

The Mondrian Forest~\cite{LakRoyTeh2014a,lakshminarayanan2016mondrian} is the closest related method to the online BSP-Forest, as it uses the Mondrian process~\cite{roy2007learning,roy2009mondrian,roy2011thesis} to place a probability distribution over all the $k$d-tree-based partitions of the domain. In regularising the Mondrian-Forest to be consistent, \cite{consistencyMondrianforest} sets the budget parameter to increase with the amount of data, and it then achieves the minimax rate in multi-dimensional space for single decision trees. \cite{mourtada2018minimax} displays the advantage of Forest settings by showing improved convergence results.

\begin{figure*}[t]
\centering
\includegraphics[width =  1 \textwidth]{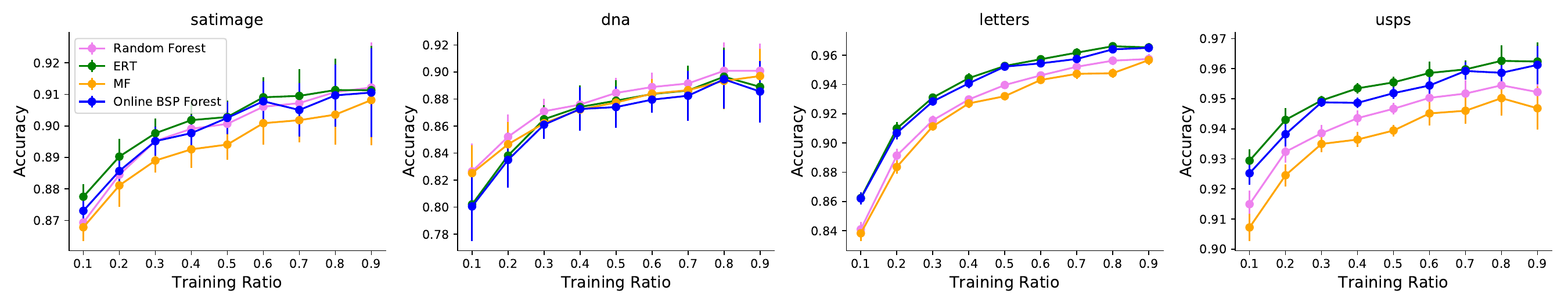}
\caption{{Classification accuracy~($\pm$ 1.96 standard error) for the satimage, dna, letters and usps datasets. The $x$-axis indicates to the ratio of training:testing data.
}}
\label{fig:real_data_classifer}
\end{figure*}

In its favour, and in contrast to the above methods, the online BSP-Forest uses more than one dimension to implement node splits, and is accordingly more efficient. Further, the arrival of new data is dealt with by considering both the expansion of the feature space (i.e.~the convex hull representation of tree nodes) and expansion of the budget line (i.e.~an increasing budget sequence). This ensures the online BSP-Forest is represented efficiently and theoretically guaranteeing universal consistency.

\section{Experiments}
\label{section:results}

We examine the performance of the online BSP-Forest in regression and classification tasks. Unless specified, in each analysis, we re-scale the data features to the domain $[0, 1]^d$ space, set the parameter of the Exponential distribution of cut cost to $L(\square)/2$~(half of the perimeter of the ($2$-dimensional) block $\square$), and specify the budget sequence  as $\tau_n=n^{1/(d+2)}(\forall n\in\{1,\ldots,N\})$. Through these experiments, we show that (1) the online-BSPF performs better than other online algorithms and at the same time, it keeps competitive to the batch trained algorithms in regression and classification tasks; (2) the online-BSPF produce more efficient partitions than the Mondrian Forest algorithm.

\begin{figure}[t]
\centering
\includegraphics[width =  0.45 \textwidth]{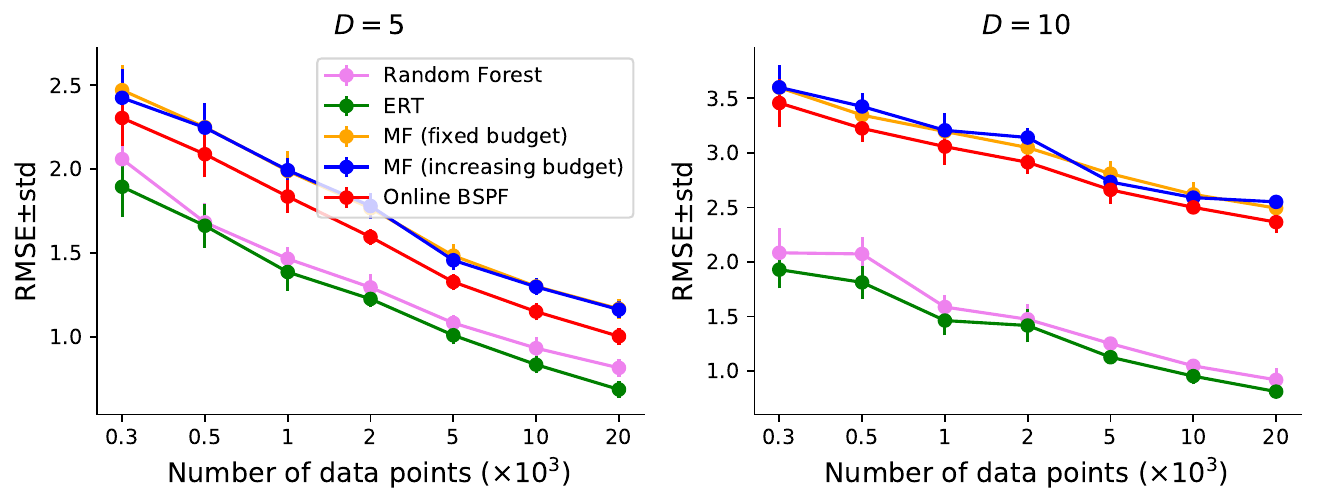}
\caption{{RMSE~($\pm$ 1.96 standard error) on Friedman's test function, as a function of the number of training data ($x$-axes) and dimension of the features $\pmb{x}$.}}
\label{fig:Friedman_result}
\end{figure}
The online BSP-Forest is compared with several state-of-the-art methods: (1) a random forest (RF)~\cite{breiman2001random}; (2) Extremely Randomized Trees (ERT)~\cite{geurts2006extremely}; (3) a Mondrian Forest (MF) with fixed budget~\cite{LakRoyTeh2014a, lakshminarayanan2016mondrian}; and (4) a Mondrian Forest with increasing budget~\cite{consistencyMondrianforest}. The number of trees is fixed to $M=100$ to avoid high computational costs. For the RF and ERT, we use the implementations in the {\tt scikit-learn} toolbox~\cite{scikit-learn}. For the MF with infinite budget, we use the {\tt scikit-garden}  implementation.
For the parameter tuning in the RF and ERT, we use the {\tt GridSearchCV} package in the {\tt scikit-learn} toolbox and focus on tuning the features of ``{\tt max$\_$features}'', which concerns the number of features considered when selecting the best split. Each test case was replicated $16$ times, and we report summary means and $1.96\times$ standard errors for each measure.

\begin{table*}[t]
\caption{{\small Performance comparison on Apartment data and Airline data (RMSE$\pm$1.96 standard error)}}\label{table_4}
\centering
{\small\begin{tabular}{c|ccccc}
  \hline
{Dataset} & RF & ERT & MF (fixed $\tau$) & MF (increasing $\tau_n$) & { online BSP-Forest}
\\  \hline
Apartment & $\pmb{0.526\pm 0.02}$& $0.531\pm 0.02$ & $0.545\pm 0.04$& $0.541\pm 0.03$& $0.536\pm 0.03$ \\
\hline
Airline ($100K$) & $\pmb{32.2\pm0.7}$ & $34.1\pm1.0$ & $36.5\pm0.5$ & $35.1\pm1.2$ & $35.9\pm0.8$ \\
\hline
Airline ($400K$) & $33.7\pm0.4$ & $\pmb{33.5\pm0.9}$ & $35.8\pm1.2$ & $36.2\pm0.9$ & $35.2\pm1.3$ \\
\hline
Airline ($1M$) & $34.1\pm0.9$ & $\pmb{33.5\pm0.6}$ & $37.3\pm1.1$ & $36.5\pm0.6$ & {$35.9\pm1.1$} \\
\hline
\end{tabular}}
\end{table*}

\begin{figure*}[t]
\centering
\includegraphics[width =  0.48 \textwidth]{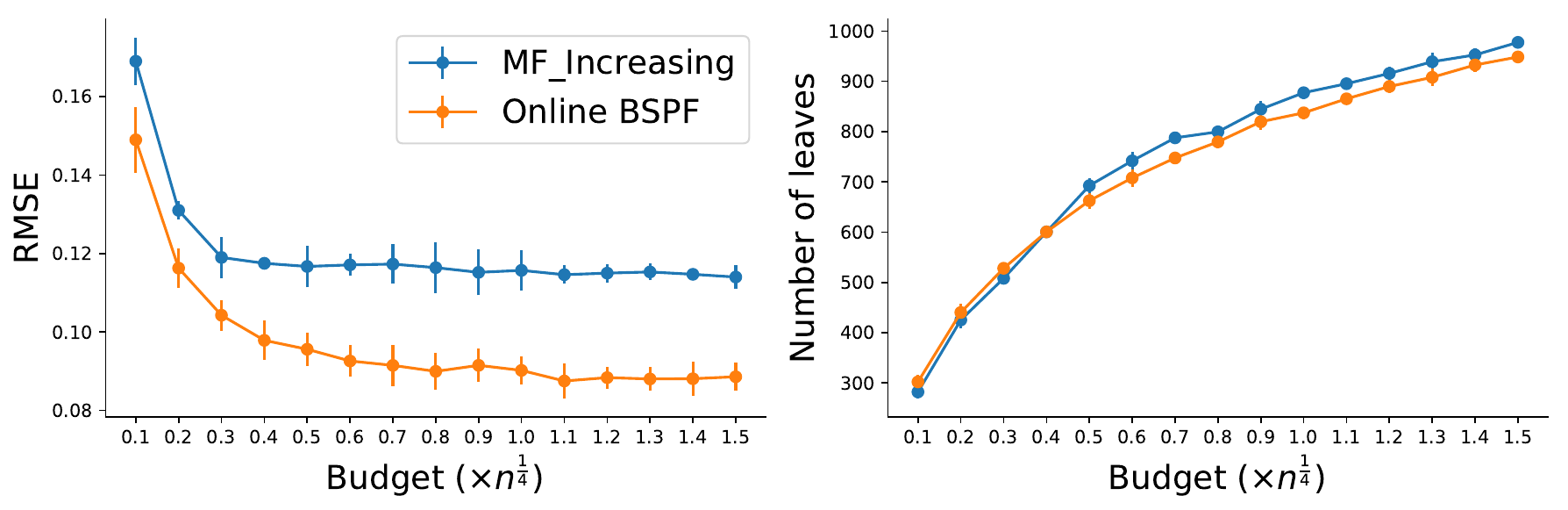}
\includegraphics[width =  0.48 \textwidth]{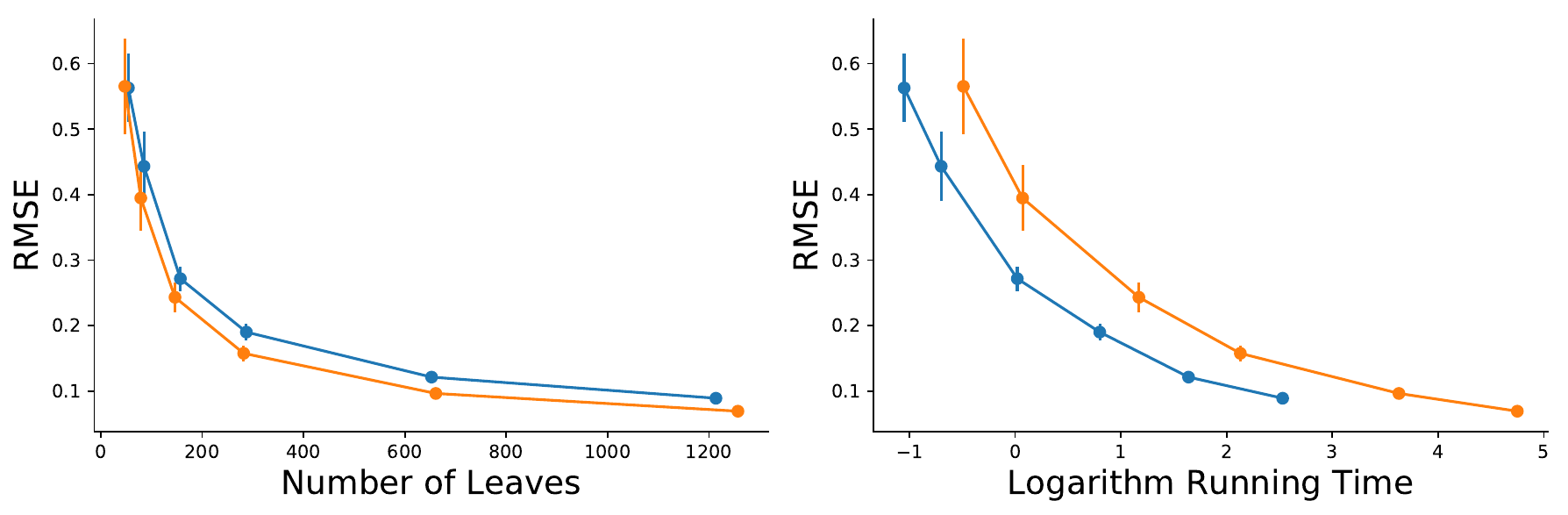}
\caption{Left two sub-figures: RMSE and number of leaf nodes comparisons for the Mondrian Forest and the online BSP-Forests under different settings of the budget sequences. Right two sub-figures: RMSE comparison under different number of datapoints. Points in each curve refers to the case of $N=300, 500, 1,000, 2,000, 5,000, 10,000$ datapoints.}
\label{fig:cut_efficiency}
\end{figure*}

\subsection{Classification}
We examine the classification accuracy of the online BSP-Forest through four real datasets~\cite{lib_svm_dataset} used in~\cite{lakshminarayanan2016mondrian,consistency_online_BF}: satimages~($N=4,435$), letter~($N=15,000$), dna~($N=2,000$) and usps~($N=7,291$). For all these $4$ datasets, we use Principle Component Analysis~(PCA) to extract $4$ principle components before deploying the models. As we have mentioned that the online BSP-Forest is suitable for low dimensional case, this dimensional reduction technique helps us to largely reudce the computational cost~(e.g. DNA dataset with $256$ features). For each dataset, we use different ratios of the whole data as the training data and use the rest as the testing data. For the label prediction, we would first use the majority vote to determine the class label of the nodes in each tree and then again use the majority vote over the datapoints' covered nodes to predict the label for the data point. 

The detail classification performance~(under different ratios of training data) for each dataset is displayed in Figure~\ref{fig:real_data_classifer}. Except for the dna dataset, the performance of the online-BSPF is quite competitive to RF and ERT and much better than the Mondrian Forest algorithm in other three datasets. As RF, ERT and MF proceed each cut using one feature only, the effectiveness of oblique cuts can be verified even for the online setting.

\subsection{Regression}
\paragraph{Friedman's Function (simulated data):} The performance of the online BSP-Forest is first evaluated on the Friedman's function~\cite{friedman1991, chipman2010bart, linero2017bayesian}. In this setting, each datapoint $\pmb{x}'=(x_1,\ldots,x_d)$ is generated from a $d$-dimensional uniform distribution, and its label $y$ takes the form: 
{\small\begin{align}
    y=10\sin(\pi x_1x_2)+20\left(x_3-\frac{1}{2}\right)^2+10x_4+5x_5+\epsilon 
\end{align}}
, where $\epsilon\sim\mathcal{N}(0, \sigma^2), \sigma^2=1$.
Friedman's function consists of two nonlinear terms, two linear terms and an interaction term.

We compute the RMSE (root mean squared error) under each method as the true function is known, for different numbers of datapoints $N$, and for two different dimensional setups: $d=5$ where all dimensions are informative, and $d=10$ where only the first 5 dimensions are informative.
The results are shown in Figure~\ref{fig:Friedman_result}. The online BSP-Forest performs better (in RMSE) than the two (online) Mondrian-Forests, with $\sim0.1$ improvement in RMSE.

Since the three online algorithms are implemented independently of the data labels, it is not surprising that they perform worse than the batch trained RF and ERT. The performance differential is greater when $5$ noisy dimensions ($d=10$) are added. Because the online-BSPF purely uses the generative process to generate the tree structure, it distinguishes between meaningful and noisy dimensions less well. Similar observations have been reported in~\cite{lakshminarayanan2016mondrian}. We may overcome such a performance deficit by increasing the budget, which will permit additional 
cuts in meaningful dimensions in spite of some less useful cuts in noisy dimensions. 


\paragraph{UK Apartment Price data:} The performance of the online BSP-Forest is examined on the UK Apartment Price Data~\cite{largegaussianprocess}, which records the 
selling price of $122,341$ apartments in the UK from February to October 2012. We take the apartment price as the target label data, and use GPS coordinates~(coarsely estimated based on the apartment's postcode and GPS database) as the feature data ($\pmb{x}$). 

To evaluate the performance of the online BSP-Forest, we randomly sample $20\%$ of the data as the training data and predict the price values on the remaining data. Row $1$ in Table~\ref{table_4} displays the RMSE values of each method. 
While, as before, the  RF and ERT methods perform better than the three online learning algorithms, their RMSE values are similar. Also, the online BSP-Forest outperforms the more directly comparable Mondrian Forest variants.

\paragraph{Airline Delay data:}
We finally analyse the regression performance  of the online BSP-Forest on the airline delay data~\cite{largegaussianprocess}. 
We wish to predict flight delay time ($y$) based on $8$ factors ($\pmb{x}$). Following~\cite{distributed_gaussian_processes,lakshminarayanan2016mondrian}, , these factors are set to include the age of the plane, prospective flight distance, prosepctive flight airtime, departure time, arrival time, day of the week, day of the month and month of the year.

Following the settings of ~\cite{distributed_gaussian_processes,lakshminarayanan2016mondrian}, we set the number of training datapoints as $N=100K$, $400K$ and $1$ million, and specify the test data  as the $100K$ observations immediately following the training data. Rows 2--4 in Table~\ref{table_4} display each methods' RMSE values. Similar conclusions can be drawn as for the previous analyses: while the RF and ERT perform better than any the online algorithm,  the online BSP-Forest is the best performing online algorithm. In the exceptional case of $N=100K$, where the MF with increasing budget performs better than the online BSP-Forest, the former variance~($1.2$) is larger than the later~($0.8$).


\subsection{Efficiency of the cutting strategy} \label{exp:cut_efficiency}
Cutting efficiency is a primary advantage of the online BSP-Forest. We quantitatively explore this efficiency issue through the simple function: $y= 10\sin(\pi x_1x_2)+0.2\epsilon,\epsilon\sim\mathcal{N}(0, 1)$. Two different cases are investigated: (1) different budget sequences. We set $N=5,000$ and the budget sequence as $\tau_n=0.1n^{1/4},0.2n^{1/4}, \ldots, 1.5n^{1/4}$; (2) different number of datapoints. Based on the results of case~(1), we set $\tau_n=n^{1/4}$ and let the number of datapoints as $N=300, 500, 1,000, 2,000, 5,000, 10,000$ for case~(2). Performance results are shown in Figure~\ref{fig:cut_efficiency}. 

For case~(1), the left two sub-figures confirm that, for different budget sequences, the online BSP-Forest can always obtain better RMSE performance than the Mondrian Forest. The numbers of leaves are also similar between these two algorithms. It is noted that, as each node is restricted to be cut only if it contains more than $3$ datapoints, these algorithms' performance becomes stable even for large value settings of budget sequences. For case~(2), the right two sub-figures indicate the consistent better performance of the online-BSPF for different number of datapoints, with the price of requiring a bit longer running time~(due to the computational cost of $\mathcal{O}(d^2)$).

\section{Conclusion \& Future work}
In this paper we have developed an online BSP-Forest framework that addresses the scalability and theoretical challenges of the online BSP-Forest algorithm. Through a non-trivial scheme for efficiently incorporating sequential  datapoints, the model construction for the online BSP-Forest follows the same distribution as that for the batch setting. By using more than one dimension to cut the underlying space, the online BSP-Forest is a demonstrably efficient model for classification and regression tasks. Our experimental results (Section \ref{section:results}) verify that the online BSP-Forest consistently outperforms the Mondrian-Forest, and is competitive with other batch random forest models.

For the future work, the Random Tesselation Forest~\cite{random_tessellation_forests} has extended the BSP-Tree Process by generating arbitrary sloped cutting hyperplanes in $d$-dimensional spaces. Extending the Random Tesselation Forest to the online learning setting and comparing with the online BSP-Forest would be interesting work. Recently, \cite{o2020stochastic} uses the technique of iteration stable (STIT) tessellations to efficiently generate arbitrary sloped cuts. It might be possible to use their method to further reduce the $\mathcal{O}(d^2)$ factor to the computational cost of the online BSP-Forest and make the online BSP-Forest suitable for high-dimensional data as well.

\section*{Acknowledgements}
Xuhui Fan and Scott A.~Sisson are supported by the Australian Research Council through the Australian Centre of Excellence in Mathematical and Statistical Frontiers (ACEMS, CE140100049), and Scott A.~Sisson through the Discovery Project Scheme (DP160102544). Bin Li is supported by Shanghai Municipal Science \& Technology Commission (16JC1420401) and the Program for Professor of Special Appointment (Eastern Scholar) at Shanghai Institutions of Higher Learning.

\bibliography{Xuhui_Machine_Learning}
\bibliographystyle{apalike}

\appendix

\section{Algorithm of Generating A Cut in $\Diamond'$ and Not Crossing into $\Diamond$}
\begin{algorithm}[H]
\caption{GenCutNorCross($\Diamond', \Diamond$)} 
\label{algo:CutConvexHull}
\begin{algorithmic}[1]
\STATE $(d_1^*, d_2^*)\sim\textrm{Cat}(\tilde{L}_{(1, 2)}(\Diamond')-\tilde{L}_{(1, 2)}(\Diamond),\ldots$, $ \tilde{L}_{(d-1, d)}(\Diamond')-\tilde{L}_{(d-1, d)}(\Diamond))$
\STATE $\theta\sim p(\theta)\propto |\pmb{l}_{\Pi_{(d_1^*, d_2^*)}(\Diamond')}(\theta)|-|\pmb{l}_{\Pi_{(d_1^*, d_2^*)}(\Diamond)}(\theta)|, \theta\in(0, \pi]$
\STATE Sample $\pmb{u}$ uniformly on $|\pmb{l}_{\Pi_{(d_1^*, d_2^*)}(\Diamond')}(\theta)|-|\pmb{l}_{\Pi_{(d_1^*, d_2^*)}(\Diamond)}(\theta)|$
\STATE Form cutting hyperplane based on $(d_1^*, d_2^*), \theta, \pmb{u}$ 
\end{algorithmic}
\end{algorithm}

\section{Some visualisations}
\begin{figure*}[t]
\centering
\includegraphics[width =  0.45 \textwidth]{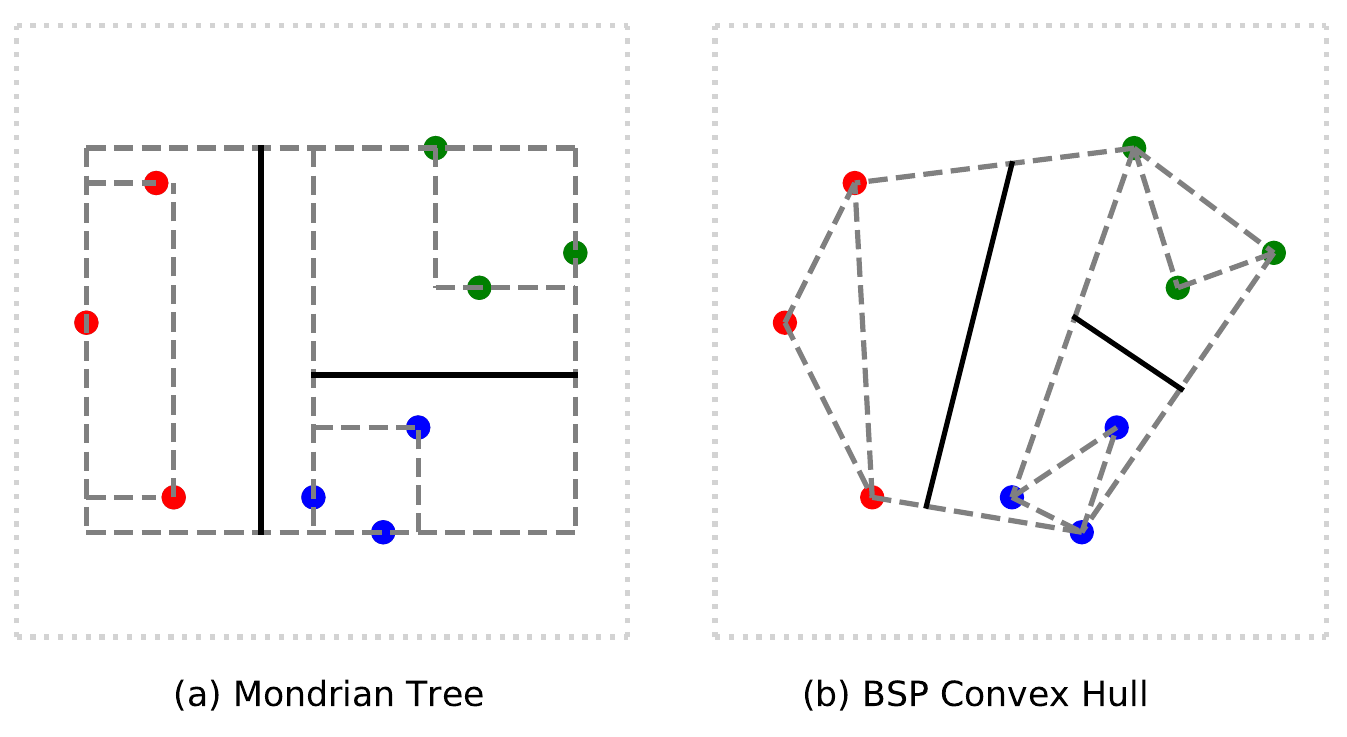}
\quad
\includegraphics[width =  0.38 \textwidth]{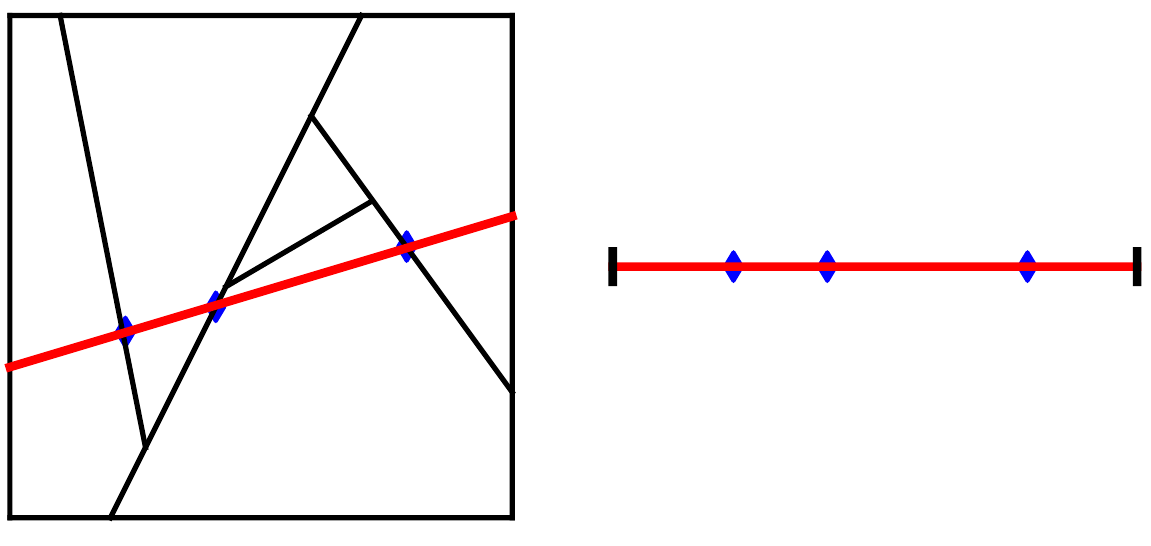}
\caption{Left: example visualization comparing the Mondrian-Tree and BSP-Tree convex hulls. Point colors identify different data labels, and the dotted, dashed and solid lines denote the whole space, the convex hulls and the cuts, respectively. Right: $2d$ visualization of oblique line slice of the BSP-Tree Process partition. Red solid line denotes the oblique line and blue dots represents the intersection points.}
\label{fig:cuts_vis}
\end{figure*}
Left panel of Figure~\ref{fig:cuts_vis} visualizes the difference between the convex hull representation of tree node in the BSP-Tree partition and Mondrian tree partition. Convex hulls are formed recursively, and larger hulls contain smaller ones. The BSP-Tree generates smaller convex hulls than the Mondrian-Tree, which means the BSP-Tree is a ``tight'' representation of the space. Right panel of Figure~\ref{fig:cuts_vis} visualizes oblique line slice of the BSP-Tree Process.

\section{Proof of Lemma 1}
\begin{lemma} \label{lemma:dim_1_poisson_process}
(Oblique line slice) For any oblique line that crosses into the domain of a BSP-Tree process with budget $\tau$, its intersection points with the partition forms a homogeneous Poisson process with intensity $2\tau$.\end{lemma}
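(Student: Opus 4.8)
The plan is to identify the intersection points on the query line with the restriction of the BSP cut process to that line, and to show that this restricted point process has a \emph{deterministic} intensity, after which the Poisson property follows from a standard characterisation. Two ingredients drive the argument. The first is the self-consistency property (Theorem~\ref{fact_1}), which lets me describe the slice intrinsically rather than through the full high-dimensional partition. The second is an integral-geometric identity: the direction/position sampling in Lines~3--4 of Algorithm~\ref{algo:BSP_partition_one_block} is exactly the rigid-motion-invariant (Cauchy) measure on cutting lines in each two-dimensional projection, whose total mass on lines meeting a convex region equals that region's perimeter $L_{(d_1,d_2)}$, and whose mass on lines meeting a segment of length $a$ equals $2a$. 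I would invoke these two facts as known.

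First I would parametrise the query line $\ell$ by arc length $s$ and treat the budget $\tau$ as a time coordinate, so that cuts are generated along an increasing ``budget-time'' by the exponential clocks of Algorithm~\ref{algo:BSP_partition_one_block}, tracking only those cuts whose hyperplane crosses $\ell$ (each contributing one intersection point). The crux is a cancellation: at budget-time $t$ the line $\ell$ passes through a random collection of current blocks, but within a block of perimeter $L$ the cut clock fires at rate $L$ while a freshly sampled cut meets that block's sub-segment of $\ell$ (of length $a_i$) with probability $2a_i/L$; hence the block contributes crossings of $\ell$ at rate $2a_i$, the perimeter cancelling. Summing over the traversed blocks gives a total crossing rate of $2\sum_i a_i=2|\ell|$, \emph{independent of the random, evolving partition and of $t$}. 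Furthermore, conditioned on a crossing the invariant measure places the crossing point uniformly on the relevant sub-segment, so in the product space (arc length $\times$ budget-time) the crossing events carry the deterministic intensity $2\,ds\,dt$.

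I would then conclude in two steps. Since the compensator of the crossing process is deterministic and equal to $2\,ds\,dt$, Watanabe's characterisation (a simple point process with deterministic compensator is Poisson) shows the crossings form a Poisson process on $\ell\times[0,\tau]$; equivalently, one verifies directly that a sub-segment of length $L$ remains uncut up to budget $\tau$ with void probability $\exp(-2\tau L)$, using the independence of block evolution in the recursion (Line~6 of Algorithm~\ref{algo:BSP_partition_one_block}) together with self-consistency. Marginalising the budget-time over $[0,\tau]$ then gives a homogeneous Poisson process on $\ell$ with intensity $\int_0^\tau 2\,dt=2\tau$, as claimed.

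The main obstacle is making the intensity computation genuinely deterministic rather than merely correct in expectation: one must argue that the perimeter cancellation holds \emph{path-wise} across the ever-refining block structure, and that crossings in disjoint portions of $\ell$ decouple because distinct blocks are cut by independent clocks (the branching independence in Line~6). A secondary point of care is the passage from the $d$-dimensional oblique line to the planar Cauchy identity: each cut lives in a single coordinate two-plane, so the crossing rate is assembled by projecting $\ell$ onto each pair in $\mathcal{D}$ and applying the $2a$ identity there. The clean constant $2\tau$ is transparent in the planar projection, and correctly assembling the contributions over all pairs in $\mathcal{D}$ is the delicate bookkeeping that completes the proof.
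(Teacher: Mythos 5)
Your proof is correct in substance and rests on the same integral--geometric fact as the paper's, but it reaches it by a genuinely different route. The paper first invokes self-consistency (Theorem~\ref{fact_1}) to reduce to a one-dimensional domain, and then gives meaning to ``the BSP-Tree process on a segment'' by realising the segment as the $\epsilon\to 0$ limit of a thin triangle: the triangle's perimeter tends to twice the segment length, so the exponential cut rate on a sub-segment of length $L$ becomes $2L$, the cut location is uniform on the segment, and the independence of the recursion supplies independent increments, after which the standard characterisation of a Poisson process is invoked. You instead stay in the ambient $d$-dimensional process and compute the rate at which cuts cross the query line directly, using the Cauchy identity (invariant measure of lines meeting a convex body equals its perimeter; meeting a segment of length $a$ equals $2a$) to get the path-wise cancellation $L\cdot(2a/L)=2a$ per block, hence a deterministic compensator $2\,ds\,dt$ on arc length $\times$ budget-time, and you conclude via Watanabe's characterisation or void probabilities. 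The two derivations of the factor $2$ are the same fact in different clothing --- the perimeter of a degenerate segment is $2a$ precisely because the invariant line measure hitting it is $2a$ --- but your version avoids the limit interchange implicit in the paper's triangle construction and makes the decoupling of crossings on disjoint sub-segments explicit, which is arguably the cleaner argument. One caveat applies to both proofs: for $d>2$ each cut lives in a single coordinate two-plane, so the crossing rate of a sub-segment is $2\sum_{(d_1,d_2)\in\mathcal{D}}\lvert\Pi_{d_1,d_2}(\cdot)\rvert$, i.e.\ twice the \emph{sum of projected lengths}, which depends on the orientation of the oblique line and does not reduce to $2\lvert\ell\rvert$ in general; you explicitly flag this assembly over pairs as ``delicate bookkeeping'' without carrying it out, and the paper's proof, which works with a single planar triangle, does not address it at all.
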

\begin{proof}
The self-consistent property of the BSP-Tree process guarantees that this $1$-dimensional slice follows the same way of directly generating a BSP-Tree partition on the line. 

To define the BSP-Tree partition on the line segment, we first consider the BSP-Tree partition in an obtuse triangle. Two vertices of the triangle form a line segment with the length $L$. Another vertex lies between these two vertices and has an $\epsilon$ distance to the line segment. Based on the generative process of the BSP-Tree process, the cost of cut in this triangle follows an Exponential distribution with rate parameter being the perimeter of the triangle, which is $PE=(L_1+L_2)+\sqrt{L_1^2+\epsilon^2}+\sqrt{L_2^2+\epsilon^2}$. As $PE\to 2(L_1+L_2)$ when $\epsilon\to 0$, the cost has an exponential distribution with rate parameter $2L$ accordingly. As a result, the number of cuts follows a Poisson distribution with parameter $\tau\cdot 2L $. The cut position is Uniformly distributed in the line segment. (For each projection in the direction of $\theta$, the crossing point between the cuts and line segment is Uniformly distributed.) This can verify the independent increments of the partition points in the line. 

As the two condition of Poisson process is satisfied, according to Theorem 1.10 in~\cite{balog2015mondrian}, we can get the conclusion.
\end{proof}

\begin{figure*}[t]
\centering
\includegraphics[width =  0.45 \textwidth]{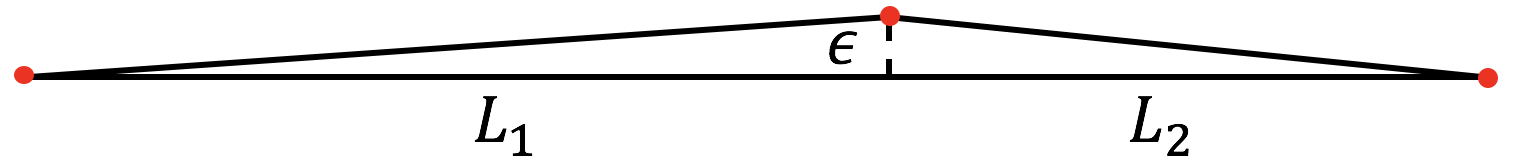}
\caption{Visualization for the $1$-dimansional space case.}
\label{fig: bsp_dim_1_vis}
\end{figure*}

\section{Proof of Theorem 2}
It is noted the main idea of the following proof largely follows the work of~\cite{consistencyMondrianforest}. We make modifications to make the proof suitable to online BSP-Forest case. 
\begin{lemma} \label{lemma:cell_diameter}
(Block diameter) Let $\pmb{x}\in[0, 1]^d$, and let $D(\pmb{x})$ be the ${L}^2$-diameter of the block containing $\pmb{x}$ in the BSP-Tree partition with budget $\tau/2$. If $\tau\to\infty$, then $D_{\tau}(\pmb{x})\to 0$ in probability. More precisely, for every $\delta, \tau>0$, we have
$$
\mathbb{P}(D_{\tau}(\pmb{x})\ge\delta)\le d(1+\frac{\tau\delta}{\sqrt{d}})\exp(-\frac{\tau\delta}{\sqrt{d}})
,\quad\mathbb{E}[D_{\tau}(\pmb{x})^2]\le\frac{4d}{\tau^2}.
$$
\end{lemma}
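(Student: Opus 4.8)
The plan is to reduce the $d$-dimensional diameter to one-dimensional quantities that Lemma~\ref{lemma:dim_1_poisson_process} already controls. Since the partition here has budget $\tau/2$, the oblique-line-slice lemma tells us that the intersection of the partition with \emph{any} straight line is a homogeneous Poisson process of intensity $2\cdot(\tau/2)=\tau$. Fix $\pmb{x}$ and let $C(\pmb{x})$ be its convex block. For each coordinate direction $e_j$ I would look at the line through $\pmb{x}$ parallel to $e_j$; the portion of this line lying in $C(\pmb{x})$ is the gap of an intensity-$\tau$ Poisson process straddling $\pmb{x}$, so its two half-lengths $G_j^+,G_j^-$ are each stochastically dominated by an independent $\mathrm{Exp}(\tau)$ variable (truncation by a face of $[0,1]^d$ only shortens them). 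These $2d$ gaps are the basic random variables of the proof.

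For the tail bound I would first record the purely geometric reduction $D_\tau(\pmb{x})^2\le\sum_{j=1}^d W_j^2\le d\max_j W_j^2$, where $W_j$ is the extent of $C(\pmb{x})$ in direction $e_j$; hence $\{D_\tau(\pmb{x})\ge\delta\}\subseteq\bigcup_j\{W_j\ge\delta/\sqrt{d}\}$. Bounding each $W_j$ by $G_j^++G_j^-$, a sum of two independent $\mathrm{Exp}(\tau)$ variables, i.e.\ a $\mathrm{Gamma}(2,\tau)$ law with survival function $\mathbb{P}(G_j^++G_j^-\ge s)=(1+\tau s)e^{-\tau s}$, and taking $s=\delta/\sqrt{d}$ with a union bound over the $d$ coordinates yields exactly $\mathbb{P}(D_\tau(\pmb{x})\ge\delta)\le d\,(1+\tau\delta/\sqrt{d})\,e^{-\tau\delta/\sqrt{d}}$. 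Letting $\tau\to\infty$ for fixed $\delta$ then gives convergence in probability.

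For the second-moment bound I would avoid integrating the tail (which loses a factor of $d$) and instead estimate coordinatewise: $\mathbb{E}[D_\tau(\pmb{x})^2]\le\sum_{j=1}^d\mathbb{E}[W_j^2]$, controlling each $\mathbb{E}[W_j^2]$ from the second moment of the Poisson gap, $\mathbb{E}[(G_j^+)^2]\le 2/\tau^2$, together with the $[0,1]^d$ truncation, to reach the claimed order $\mathbb{E}[W_j^2]=O(1/\tau^2)$ and hence $\mathbb{E}[D_\tau(\pmb{x})^2]\le 4d/\tau^2$.

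The hard part is the step $W_j\le G_j^++G_j^-$, i.e.\ certifying that the coordinate extent of the block is controlled by the axis-parallel slice through $\pmb{x}$. For the axis-aligned Mondrian cells of \cite{consistencyMondrianforest} this is automatic, since each cell is a rectangle whose side in direction $e_j$ literally equals that slice. For the \emph{oblique} BSP blocks it can fail: $W_j$ may be attained at a vertex lying off the line through $\pmb{x}$, so the slice through $\pmb{x}$ is strictly shorter than $W_j$. The fix I would pursue is to argue with the diameter-realising chord directly: this chord lies in the convex block and therefore crosses no cut, so applying Lemma~\ref{lemma:dim_1_poisson_process} to the line supporting it bounds the chord length by a Poisson gap. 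The remaining difficulty is that this chord's direction is partition-dependent, which I would handle either by a covering-net argument over directions (absorbing the net cardinality into the constants) or by exploiting that every BSP cut is axis-parallel in $d-2$ of the coordinates. Reconciling this random cut direction with the clean coordinate bound is the main obstacle and the key departure from the Mondrian argument.
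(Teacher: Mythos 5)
Your argument has the same skeleton as the paper's proof: reduce the $L^2$-diameter to one-dimensional slices through $\pmb{x}$, invoke the oblique-line-slice lemma to see that the budget-$\tau/2$ partition restricted to any line is a Poisson process of intensity $2\cdot(\tau/2)=\tau$, bound the gap straddling $\pmb{x}$ by a sum of two independent $\mathrm{Exp}(\tau)$ variables with survival function $(1+\tau s)e^{-\tau s}$, and combine the $d$ directions by a union bound for the tail and by summation for the second moment. The constants you arrive at are exactly the paper's.

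Where you differ is in the geometric reduction, and the obstacle you flag is a genuine soft spot rather than a defect unique to your write-up. The paper does not work in the fixed coordinate frame: it rotates the basis so that one axis points along the direction of the block's largest diameter, takes the slices through $\pmb{x}$ along the rotated axes, and sets $D_{\tau}(\pmb{x})=\bigl(\sum_{d'}D_{\tau}^{(d')}(\pmb{x})^2\bigr)^{1/2}$. This sidesteps your worry that the coordinate extent $W_j$ can exceed the axis-parallel slice through $\pmb{x}$, but only superficially: the diameter-realising chord of a convex polytope need not pass through $\pmb{x}$, so the slice through $\pmb{x}$ in the rotated direction can still be strictly shorter than the extent it is meant to control, and the rotated direction is itself a function of the realised partition, so applying the line-slice lemma to it (a statement about a fixed line) requires exactly the kind of uniformisation you sketch. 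Neither point is addressed in the paper's proof, which simply asserts the rotated slices suffice; your proposed repairs --- arguing via the diameter-realising chord, which lies inside the block and hence crosses no cut, plus a covering net over directions at the cost of constants --- are what would be needed to make either version rigorous. A further shared blemish: the paper bounds $\mathbb{E}[D_{\tau}^{(1)}(\pmb{x})^2]$ by $\mathbb{E}[E_1^2]+\mathbb{E}[E_2^2]=4/\tau^2$, dropping the cross term of $(E_1+E_2)^2$ (which would give $6/\tau^2$); your coordinatewise estimate has the same loose end, and in both cases the stated constant $4d/\tau^2$ needs either the boundary truncation or a slightly more careful computation to come out exactly.
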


\begin{proof}
Let $\square_{\tau}(\pmb{x})$ denotes the block of a Binary Space Partitioning-Tree partition containing $\pmb{x}\in[0, 1]^d$. In the space of $[0, 1]^d$, we can build up $d$ orthogonal basises to describe the block $\square_{\tau}(\pmb{x})$, with one basis is in the direction of largest diameter in $\square_{\tau}(\pmb{x})$. While it is obvious that rotations of the bock will not affect the diameter, w.l.o.g., we rotate the block and treat the direction with largest diameter as dimension $1$. By definition, the $L^{\infty}$-norm diameter $D_{\tau}(\pmb{x})$ of $\square_{\tau}(\pmb{x})$ is $\max\{\square_{\tau}^{(d')}(\pmb{x})\}_{d'}$. While recording these smallest and largest interceptions in these rotated dimensions as $\{L_{\tau}^{(d')},R_{\tau}^{(d')}\}_{d'}$, all of the random variables $R_{\tau}(\pmb{x})-L_{\tau}(\pmb{x})$ have the same distribution, it suffices to consider $D_{\tau}^{(1)}(\pmb{x})=R^{(1)}_{\tau}(\pmb{x})-L^{(1)}_{\tau}(\pmb{x})$. 

In this rotated block, consider the segment $I^{(1)}(\pmb{x})=[0,\sqrt{d}]\times \pmb{x}^{-i}$ containing $\pmb{x}$, and denote $\phi_{\tau}^1(\pmb{x})\subset[0, \sqrt{d}]$ the restriction of the partition to $I^{(1)}(x)$. Note that $R_{\tau}^{(1)}(\pmb{x})$~($L_{\tau}^{(1)}(\pmb{x})$) is the lowest~(highest) element of $\phi_{\tau}^{(1)}(\pmb{x})$ that is larger~(smaller) than $x_1$, and is equal to $\sqrt{d}$~($0$) if $\phi_{\tau}^{(1)}(\pmb{x})\cap[x_1, \sqrt{d}]$~($\phi_{\tau}^{(1)}(\pmb{x})\cap[0, x_1]$) is empty. By Theorem 1 and Lemma 1, $\phi_{\tau}(\pmb{x})$ is a Poisson process with intensity $\tau$. 

This implies the distribution of $(L_{\tau}^{(1)}(\pmb{x}), R_{\tau}^{(1)}(\pmb{x}))$ is the same as that of $(\tilde{L}_{\tau}^{(1)}(\pmb{x})\lor 0, \tilde{R}_{\tau}^{(1)}(\pmb{x})\land\sqrt{d})$, where $\tilde{\phi}_{\tau}^1(\pmb{x})$ is a Poisson process on $\mathbb{R}$ with intensity $\tau$, and $\tilde{L}_{\tau}^{(1)}(\pmb{x})=\sup(\tilde{\phi}_{\tau}^{(1)}(x_1)\cup(-\infty, x_1)), \tilde{R}_{\tau}^{(1)}(\pmb{x})=\inf(\tilde{\phi}_{\tau}^{(1)}(x_1)\cap(x_1, \infty))$. By the property of the Poisson point process, this implies that $x_1-L_{\tau}^{(1)}(\pmb{x}), R_{\tau}^{(1)}(\pmb{x})-x_1\overset{d}{=}(E_1, E_2)$, where $E_1, E_2$ are independent exponential random variables with parameter $\tau$. $D_{\tau}^{(1)}(\pmb{x})=R_{\tau}^{(1)}(\pmb{x})-x_1+x_1-L_{\tau}^{(1)}(\pmb{x})$ is upper bounded by $E_1+E_2\sim\textrm{Gamma}(2, \tau)$. Thus, we have $\forall \delta >0, \mathbb{P}(D_{\tau}^1(\pmb{x}))\ge\delta)\le (1+\tau\delta)e^{-\tau\delta}$ and $\mathbb{E}[D_{\tau}^1(\pmb{x})^2]\le \mathbb{E}(E_1^2)+\mathbb{E}(E_2^2)=\frac{4}{\tau^2}$. The bound of $D_{\tau}(\pmb{x})$ can be obtained by $D_{\tau}(\pmb{x})=\sqrt{\sum_{d'}D_{\tau}^{d'}(\pmb{x})}$. 
\end{proof}

\begin{lemma} \label{lemma:expected_k}
If $K_{\tau}$ denotes the number of cuts in the BSP-Tree process, we have $\mathbb{E}[K_{\tau}]\le(1+\tau)^de^{d(d-1)}$.  
\end{lemma}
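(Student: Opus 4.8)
The plan is to recast the count of cuts as a continuous-time fragmentation in the budget variable and then control the growth of the partition through a finite hierarchy of ``projection-volume'' potentials, mirroring the Mondrian argument of \cite{consistencyMondrianforest,mourtada2018minimax} but paying for the oblique cuts. First I would read Algorithm~\ref{algo:BSP_partition_one_block} as a branching process in $t\in[0,\tau]$: a block $\square$ waits an $\mathrm{Exp}(r(\square))$ time, where $r(\square)=\sum_{(d_1,d_2)\in\mathcal D}L_{(d_1,d_2)}(\square)$ is its splitting intensity, and then fragments into $\square',\square''$, each continuing with the residual budget. The blocks alive at budget-time $t$ form a partition of $[0,1]^d$, and every split adds exactly one block, so the split-counting process has compensator equal to the current total rate. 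This gives the identity
\[
\mathbb{E}[K_\tau]=\int_0^\tau R(t)\,dt,\qquad R(t)=\mathbb{E}\Big[\sum_{\square\text{ alive at }t} r(\square)\Big].
\]
Since every sub-block is a convex subset of $[0,1]^d$, each $\Pi_{d_1,d_2}(\square)$ is a convex subset of the unit square and hence $L_{(d_1,d_2)}(\square)\le 4$, so $r(\square)\le 4\binom d2=2d(d-1)$ uniformly.

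Next I would introduce, for each coordinate subset $S\subseteq\{1,\dots,d\}$, the potential $\widehat V_S(t)=\mathbb{E}\big[\sum_{\square\text{ alive at }t}\mathrm{vol}_{|S|}(\Pi_S(\square))\big]$, the expected total $|S|$-volume of the projections onto the coordinates in $S$. Two endpoints anchor the hierarchy: $\widehat V_{[d]}(t)\equiv 1$, because the $d$-dimensional volume is exactly conserved by each cut, while $\widehat V_\emptyset(t)=\mathbb{E}[N(t)]=1+\mathbb{E}[K_t]$ is the quantity of interest. A cut in pair $q=(d_1^*,d_2^*)$ depends only on the coordinates in $q$; so if $q\subseteq S$ the cut genuinely partitions $\Pi_S(\square)$ and leaves $\mathrm{vol}_{|S|}(\Pi_S(\cdot))$ invariant, whereas if $q\not\subseteq S$ the projections of the two children onto $S$ overlap and the potential can only increase. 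Decomposing per pair, $r(\square)\,\mathbb{E}_{\mathrm{split}}[\Delta\,\mathrm{vol}_{|S|}(\Pi_S)]=\sum_q L_q\,\mathbb{E}_{\mathrm{cut}\in q}[\Delta\,\mathrm{vol}_{|S|}(\Pi_S)]$ and discarding the vanishing $q\subseteq S$ terms, I would aim to establish the triangular differential inequalities
\[
\frac{d}{dt}\widehat V_S(t)\ \le\ c_d\sum_{j\notin S}\widehat V_{S\cup\{j\}}(t),
\]
with a constant $c_d$ governed by the uniform bound $L_q\le 4$. Solving this system from the conserved top $\widehat V_{[d]}\equiv 1$ downward --- exactly as the Mondrian recursion $\dot Q_m=m\,Q_{m-1}$ integrates to $(1+t)^m$ --- produces a degree-$d$ polynomial in $\tau$ whose coefficients absorb the $c_d$'s; bookkeeping the $\binom d2$ pairwise contributions over the $d$ levels then caps the multiplicative constant at $e^{d(d-1)}$, yielding $\mathbb{E}[K_\tau]\le(1+\tau)^d e^{d(d-1)}$.

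The main obstacle is the step bounding the overlap term $\Delta\,\mathrm{vol}_{|S|}(\Pi_S)$ for cuts with $q\not\subseteq S$ by the finer potential $\widehat V_{S\cup\{j\}}$. This is precisely the feature absent from the axis-aligned Mondrian process, where a cut in dimension $j\notin S$ leaves $\prod_{i\in S}\ell_i$ unchanged and the hierarchy closes \emph{exactly}; for oblique BSP cuts the two children genuinely overlap after projection, and for thin ``sliver'' blocks this overlap can be large relative to $\mathrm{vol}_{|S|}(\Pi_S)$ itself, so a pointwise bound fails. I would therefore control it only in expectation, averaging over the BSP cut-position law (Lines~3--4 of Algorithm~\ref{algo:BSP_partition_one_block}) so that the expected overlap created along coordinate $j$ is comparable to $\widehat V_{S\cup\{j\}}$ up to a dimensional constant. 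This averaging, combined with the uniform estimate $L_q\le 4$ and the chord-length bound (a chord of a convex subset of the unit square has length at most $\sqrt 2$, so the cut pair's own perimeter grows by at most $2\sqrt 2$), is what closes the system at the price of the slack factor $e^{d(d-1)}$ rather than the exact $(1+\tau)^d$ of the Mondrian case; the self-consistency property (Theorem~\ref{fact_1}) and the oblique-slice Lemma~\ref{lemma:dim_1_poisson_process} are what license reducing the geometry of each slice to the one-dimensional Poisson behaviour used in these estimates.
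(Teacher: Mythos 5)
Your overall strategy---read Algorithm~\ref{algo:BSP_partition_one_block} as a fragmentation process, write $\mathbb{E}[K_\tau]=\int_0^\tau R(t)\,dt$, and tame the growth of the total splitting rate through a hierarchy of projection-volume potentials---is a genuinely different route from the paper's, and the scaffolding you set up (the compensator identity, the uniform rate bound $r(\square)\le 2d(d-1)$ from monotonicity of perimeter under convex inclusion, the conservation $\widehat V_{[d]}\equiv 1$) is all correct. But the proof has a genuine gap exactly where you flag it: the triangular differential inequality $\frac{d}{dt}\widehat V_S(t)\le c_d\sum_{j\notin S}\widehat V_{S\cup\{j\}}(t)$ is never established. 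For the Mondrian process this recursion closes \emph{exactly} and pointwise; for oblique cuts the increment of $\mathrm{vol}_{|S|}(\Pi_S(\cdot))$ is the volume of the overlap of the two children's projections, and you concede that no pointwise bound holds for sliver-shaped blocks. Your proposed repair---averaging the overlap against the BSP chord law and claiming the result is comparable to $\widehat V_{S\cup\{j\}}$---is plausible in two dimensions (a kinematic-measure computation suggests rate times expected overlap is of order the area) but is not carried out, and in general dimension for general convex polytopes this is the entire technical content of the lemma. The final step, ``bookkeeping the $\binom d2$ pairwise contributions over the $d$ levels then caps the multiplicative constant at $e^{d(d-1)}$,'' is asserted rather than derived; nothing in the argument as written produces that particular constant. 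Also, Lemma~\ref{lemma:dim_1_poisson_process} does not ``license'' these overlap estimates: it describes intersections of the partition with a line, not the projection geometry of individual blocks.

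The paper's own proof sidesteps the projection-overlap problem entirely and is much shorter. It first observes that every leaf splits at rate at most $L(\square)$ (the perimeter of the root dominates that of any convex sub-block), so the leaf count is dominated by a Yule process and $\mathbb{E}[K_\tau^{\square}]\le e^{\tau L(\square)}$---this is the exponential-in-$\tau$ bound your compensator identity also yields, and on its own it is too weak. The polynomial dependence on $\tau$ then comes from a covering argument: tile $[0,1]^d$ by a regular grid of $\lceil\tau\rceil^d$ boxes of side $1/\lceil\tau\rceil$; every cut splits at least one box, and by self-consistency (Theorem~\ref{fact_1}) the restriction to each box is again a BSP-Tree process, now with perimeter $L(\square)/\lceil\tau\rceil$, so each box contributes at most $e^{\tau L(\square)/\lceil\tau\rceil}\le e^{L(\square)}$ expected splits. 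Summing over boxes gives $\mathbb{E}[K_\tau]\le(1+\tau)^d e^{L(\square)}$, which is the stated bound. If you want to salvage your approach, the honest comparison is that the grid covering trades your unproven analytic inequality for a single application of self-consistency plus the crude Yule bound; I would recommend adopting that route unless you can actually prove the averaged overlap estimate in arbitrary dimension.
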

\begin{proof}
Let $\square\subset[0, 1]^d$ be an arbitrary block, and let $K_{\tau}^{\square}$ denotes the number of splits performed in the BSP-Tree process with budget value $\tau/2$. As shown in~\cite{pmlr-v84-fan18b}\cite{pmlr-v89-fan18a}, the waiting time of a cut occurs in a leaf node $\phi$ of the BSP-Tree process follows an exponential distribution of rate $L(\square_{\phi})\le L(\square)$, where $L(\square)$ denotes the perimeter of the block $\square$. The number of leaves $K_t+1\ge K_t$ at time $t$ is dominated by the number of individuals in a Yule process with rate $L(\square)$~\cite{fye1925mathematical}. Thus, we have: $\mathbb{E}(K_{\tau}^\square)\le e^{\tau L(\square)}$. 

Considering the covering $\mathcal{C}$ of $\square$ by a  regular grid of $\lceil\tau\rceil^d$ boxes obtained by equally dividing each coordinate of $\square$ in $\lceil\tau\rceil$ parts. Each cut in $\square$ will induce a split in at least one box $C$in $\mathcal{C}$ and $B_{\tau}^{\mathcal{C}}$ is also a BSP-Tree process in box $C$ (due to the self-consistency of the BSP-Tree process), we have: $\mathbb{E}(K_{\tau}^\square)\le\sum_{C\in\mathcal{C}}\mathbb{E}(K_{\tau}^C)\le \lceil\tau\rceil^de^{\tau \frac{L(\square)}{\lceil\tau\rceil}}\le (\tau+1)^de^{L(\square)}$. 
\end{proof}

\begin{lemma} \label{lemma_4}
Assume that the total number of splits $K_{\tau}$ performed by the BSP-Tree partition satisfies $\lim_{n\to\infty}\mathbb{E}(K_{\tau_n})/n\to 0$. For $N_{n}(\pmb{x})$ being the number of datapoints in $\pmb{x}_{1:N}$ fall in $A_{\tau_n}(\pmb{x})$, we have $N_{n}(\pmb{x})\to\infty$ in probability.
\end{lemma}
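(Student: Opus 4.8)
The plan is to condition on the random partition and reduce the claim to a binomial concentration argument, following the strategy developed for the Mondrian forest in~\cite{consistencyMondrianforest}. As in Lemmas~\ref{lemma:cell_diameter} and~\ref{lemma:expected_k}, I would work with the BSP-Tree process partition of the ambient cube $[0,1]^d$ (generated independently of the feature locations), which by the self-consistency property (Theorem~\ref{fact_1}) has the same law as the data-driven convex-hull construction restricted to the data; this is what lets me treat the $n$ training features as independent of the partition. Taking $\pmb{x}\sim\mu$ a fresh feature independent of the training set, I condition on the realised partition $\Pi$ with cells $A_1,\dots,A_{K_{\tau_n}+1}$ of masses $p_j=\mu(A_j)$. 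Since the training features are independent of both $\pmb{x}$ and $\Pi$, the number of them landing in the cell of $\pmb{x}$ is conditionally $\mathrm{Bin}(n,p_j)$ on the event $\pmb{x}\in A_j$, so for any fixed threshold $t$,
\begin{align*}
\mathbb{P}\big(N_n(\pmb{x})\le t \,\big|\, \Pi\big)=\sum_j p_j\,\mathbb{P}\big(\mathrm{Bin}(n,p_j)\le t\big).
\end{align*}
It then suffices to show that the expectation of the right-hand side tends to $0$ for every fixed $t$.

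Next I would split the cells according to their mass using a threshold $a_n$ chosen so that $n a_n\to\infty$ while $a_n\,\mathbb{E}[K_{\tau_n}]\to 0$. The hypothesis $\mathbb{E}[K_{\tau_n}]/n\to 0$ (which itself follows from Lemma~\ref{lemma:expected_k} under the budget condition $(\tau_n)^d/n\to 0$) guarantees $1/\mathbb{E}[K_{\tau_n}]\gg 1/n$, so a valid choice such as $a_n=(n\,\mathbb{E}[K_{\tau_n}])^{-1/2}$ exists. For the light cells with $p_j\le a_n$, I bound their total contribution by $\sum_{j:\,p_j\le a_n}p_j\le (K_{\tau_n}+1)a_n$; taking expectations and invoking the choice of $a_n$ sends this term to $0$. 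For the heavy cells with $p_j> a_n$, the binomial mean $np_j> na_n\to\infty$ eventually dominates the fixed $t$, so a Chernoff lower-tail bound yields $\mathbb{P}(\mathrm{Bin}(n,p_j)\le t)\le e^{-na_n/8}$ uniformly for large $n$; since $\sum_j p_j\le 1$, this contribution is at most $e^{-na_n/8}\to 0$.

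Combining the two parts gives $\mathbb{P}(N_n(\pmb{x})\le t)\to 0$ for every fixed $t$, which is exactly the assertion $N_n(\pmb{x})\to\infty$ in probability.

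I expect the main obstacle to be the balancing act in selecting $a_n$: the threshold must simultaneously render the light cells negligible (requiring $a_n$ small relative to $1/\mathbb{E}[K_{\tau_n}]$) and force the heavy cells to be well populated (requiring $na_n$ large), and these demands are compatible only because the hypothesis supplies $\mathbb{E}[K_{\tau_n}]=o(n)$. A secondary subtlety is justifying that the data-dependent hull construction may be analysed as an ambient, data-independent partition; here the self-consistency of Theorem~\ref{fact_1} (together with the Poisson slice structure of Lemma~\ref{lemma:dim_1_poisson_process} underlying Lemma~\ref{lemma:expected_k}) is precisely what licenses identifying the conditional cell-count law with the clean binomial model used above.
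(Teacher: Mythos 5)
Your argument is correct, but it is a genuinely different proof from the one in the paper. The paper's proof is an exchangeability argument: conditioning on the partition $B_{\tau_n}$ and on the unordered collection of the $n+1$ feature points, the test point $\pmb{x}$ is equally likely to be any of them, so the probability it lands in a leaf $\square_\phi$ containing $N_\phi$ training points is $N_\phi/(n+1)$; summing only over leaves with $N_\phi\le t$ gives $\mathbb{P}(N_n(\pmb{x})\le t)\le t\,(\mathbb{E}[K_{\tau_n}]+1)/(n+1)\to 0$ in one line, with no concentration inequality and no case split on cell mass. Your route instead conditions on the cell masses $p_j=\mu(A_j)$, writes the conditional probability as $\sum_j p_j\,\mathbb{P}(\mathrm{Bin}(n,p_j)\le t)$, and handles light and heavy cells separately via a calibrated threshold $a_n$ and a Chernoff lower-tail bound. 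What each buys: the paper's version is shorter and yields a cleaner explicit rate $t(\mathbb{E}[K_{\tau_n}]+1)/(n+1)$, but it implicitly requires the partition to be a symmetric function of the $n+1$ points (or independent of them) for the $N_\phi/(n+1)$ step to be exact; your version needs only that the partition is independent of the data, which you rightly flag as the place where self-consistency must be invoked for the hull-based construction, so your assumptions are arguably more transparent. One small loose end in your write-up: the light-cell bound is $(K_{\tau_n}+1)a_n$ in expectation, so you need $a_n\to 0$ as well as $a_n\mathbb{E}[K_{\tau_n}]\to 0$; your choice $a_n=(n\,\mathbb{E}[K_{\tau_n}])^{-1/2}$ only guarantees this when $\mathbb{E}[K_{\tau_n}]$ is bounded away from zero (which holds in the intended regime $\tau_n\to\infty$, and is otherwise fixed by taking $a_n\wedge n^{-1/2}$), so this is cosmetic rather than a gap.
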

\begin{proof}
We fix $n\ge 1$, and conditionally on the BSP-Tree partition  at the budget of $\tau_n$, $B_{\tau_n}$ is independent of $\pmb{x}$ by construction. Note that the number of leaves is $|\mathcal{L}(B_{\tau_n})|=K_{\tau_n}+1$, and $(\square_{\phi})_{\phi\in\mathcal{L}(B_{\tau_n})}$ is the corresponding blocks, where $\phi$ refers tot he leaf node. For $\phi$, we define $N_{\phi}$ to be the number of points among $\pmb{x}_1, \ldots, \pmb{x}_n$ that fall in the cell $\square_{\phi}$. Since $\pmb{x}_1, \ldots, \pmb{x}_n$ are i.i.d., so that the joint distribution of $\pmb{x}_1, \ldots, \pmb{x}_n$ is invariance under the permutation of the $n+1$ datapoints, conditionally on the set $S={\pmb{x}_1, \ldots, \pmb{x}_n}$ the probability that $\pmb{x}$ falls in the block $\square_{\phi}$. Therefore, for each $t>0$, we have:
\begin{align}
\mathbb{P}(N_{n}(\pmb{x})\le t)=&\mathbb{E}\{\mathbb{P}(N_n(\pmb{x})\le t|S, B_{\tau_n})\}\nonumber\\
=&\mathbb{E}\left\{\sum_{\phi\in\mathcal{L}(B_{\tau_n}):N_{\phi}\le t}\frac{N_{\phi}}{n+1}\right\} \\ \nonumber
\le & \mathbb{E}\left\{\frac{t|\mathcal{L}(B_{\tau_n})|}{n+1}\right\} 
=  \frac{t(\mathbb{E}(K_{\tau_n})+1)}{n+1}
\end{align}
which tends to $0$ as $n\to\infty$. 
\end{proof}

Before proving Theorem 2, we first invoke a consistency theorem~(Theorem 6.1 in \cite{devroye2013probabilistic} and we use $\square$ to denote the block for notation consistency)
\begin{customthm}{4}\label{invoke_theorem}
Consider a sequence of randomised tree classifiers $(\tilde{g}_n(\cdot,Z))$, grown independently of the labels $Y_1, \ldots, Y_n$. For $\pmb{x}\in[0, 1]^d$, denote $\square_n(\pmb{x})=\square_n(\pmb{x}, Z)$ the block containing $\pmb{x}$, $D_n(\pmb{x})$ its diameter and $N_n(\pmb{x})=N_{n}(\pmb{x}, Z)$ the number of input vectors among $\pmb{x}_1, \ldots, \pmb{x}_n$ that fall in $\square_n(\pmb{x})$. Assume that, if $\pmb{x}$ is drawn from the distribution with the following conditions:
\begin{enumerate}
    \item $\lim_{n\to\infty}D_n(\pmb{x})\to 0$ in probability;
    \item $\lim_{n\to\infty}N_n(\pmb{x})\to\infty $ in probability.
\end{enumerate}
Then the tree classifier $\tilde{g}_n$ is consistent.
\end{customthm}

The proof of Theorem 2 is:
\begin{proof}
We can show the two conditions in Theorem~\ref{invoke_theorem} are satisfied. First, Lemma 1 ensures that, if $\tau_n\to\infty$, $D_{\tau_n}(\pmb{x})\to 0$ in probability for every $\pmb{x}\in[0, 1]^d$. In particular, for every $\delta>0$, we have $\mathbb{P}(\square_{\tau_n}(\pmb{x})\ge\delta)=\int_{[0, 1]^d}\mathbb{P}(D_{\tau_n}(\pmb{x})>\delta)\mu(d\pmb{x})\to 0$ as $n\to\infty$ by the dominated convergence theorem. 

Since Lemma~\ref{lemma_4} provides the proof for the second condition, the proof of Theorem 2 is concluded.
\end{proof}

\section{Proof of Theorem 3}
It is noted the main idea of the following proof largely follows the work of~\cite{consistencyMondrianforest}. We make modifications to make the proof suitable to online BSP-Forest case. 
\begin{proof}
By the convexity of the quadratic loss function and the fact that all the BSP-Tree has the same distribution, we have that $\mathbb{E}\left[(g(\pmb{x})-\widehat{g}_n(\pmb{x}))^2\right]\le \frac{1}{m}\sum_{k=1}^m\mathbb{E}\left[(g(\pmb{x})-\widehat{g}_{n,k}(\pmb{x}))^2\right]=\mathbb{E}\left[(g(\pmb{x})-\widehat{g}_{n,1}(\pmb{x}))^2\right]$. Thus, we can prove the result for a single treee algorithm to get the conclusion.
\end{proof}

We firs write use the bias-variance decomposition of the quadratic loss by:
\begin{align}
R(\widehat{f}_n)=\mathbb{E}\left[(f(\pmb{x})-\bar{f}_n(\pmb{x}))^2\right]+\mathbb{E}\left[(\widehat{f}_n(\pmb{x})-\bar{f}_n(\pmb{x}))^2\right]
\end{align}
where $\bar{f}_n(\pmb{x}):=\mathbb{E}\left[f(\pmb{x}|\pmb{x}\in A_n(\pmb{x}))\right]$ denotes the groundtruth label value for the block containing $\pmb{x}$. The first term is bias and it measures the closeness of $f(\pmb{x})$ to the best approximator $\bar{f}_n(\pmb{x})$ (of which the label value is constant on the block containing $\pmb{x}$). The second term is variance and it measures the closeness of he best approximator $\bar{f}_n(\pmb{x})$ to the empirical approximator $\widehat{f}_n(\pmb{x})$. 

For the bias term, we have:
\begin{align}
|f(\pmb{x})-\bar{f}_n(\pmb{x})|\le&|\frac{1}{\mu(A_n(\pmb{x}))}\int_{A_{n}(\pmb{x})}(f(\pmb{x})-\bar{f}_n(\pmb{z}))\mu(d\pmb{z})|\nonumber \\
\le & \sup_{\pmb{z}\in A_n(\pmb{x})}|f(\pmb{x})-f(\pmb{z})|\nonumber \\
\le & L\sup_{\pmb{z}\in A_n(\pmb{x})}\|\pmb{x}-\pmb{z}\|_2=L\cdot D_{n}(\pmb{x})
\end{align}
where $D_{n}(\pmb{x})$ is the $l^2$-diameter of $A_{n}(\pmb{x})$. According to the result of Lemma 1, we get:
\begin{align} \label{eq:bias_term}
\mathbb{E}\left[(f(\pmb{x})-\bar{f}_n(\pmb{x}))^2\right]\le L^2\mathbb{E}\left[D_{n}(\pmb{x})^2\right]\le\frac{4dL^2}{\tau_n^2}
\end{align}

For the variance term, based on the Proposition 2 of~\cite{arlot2014analysis}: if $U$ is a random tree partition of the unit space with $k+1$ blocks, we have:
\begin{align}
\mathbb{E}\left[(\bar{f}_U(\pmb{x})-\widehat{f}_U(\pmb{x}))^2\right]\le\frac{k+1}{n}(2\sigma^2+9\|f\|_{\infty})
\end{align}. 
Thus, we can have:
\begin{align}
&\mathbb{E}\left[(\bar{f}_n(\pmb{x})-\widehat{f}_n(\pmb{x}))^2\right]=\sum_{k=0}^{\infty}\mathbb{P}(k)\mathbb{E}\left[(\bar{f}_U(\pmb{x})-\widehat{f}_U(\pmb{x}))^2|k\right]\nonumber \\
\le & \sum_{k=0}^{\infty}\mathbb{P}(k)\frac{k+1}{n}(2\sigma^2+9\|f\|_{\infty})\nonumber \\
= & \frac{\mathbb{E}(K_n)+1}{n}(2\sigma^2+9\|f\|_{\infty})
\end{align}
Based on the result of Lemma 2, we get
\begin{align} \label{eq:variance_term}
&\mathbb{E}\left[(\bar{f}_n(\pmb{x})-\widehat{f}_n(\pmb{x}))^2\right]\nonumber \\
\le & \frac{(1+\tau_n)^de^{d(d-1)}+1}{n}(2\sigma^2+9\|f\|_{\infty})
\end{align}

Combining the result of Eq.~(\ref{eq:bias_term})(\ref{eq:variance_term}), we get:
\begin{align}
& R(\widehat{f}_n)\nonumber \\
\le&\frac{4dL^2}{\tau_n^2} + \frac{(1+\tau_n)^de^{d(d-1)}+1}{n}(2\sigma^2+9\|f\|_{\infty})
\end{align}
Taking $\tau_n=n^{\frac{1}{d+2}}$ can make $R(\widehat{f}_n)$ scales to $\mathcal{O}(n^{-\frac{2}{d+2}})$. 

\section{Additional Experimental Results}
\begin{figure}[t]
\centering
\includegraphics[width =  0.45 \textwidth]{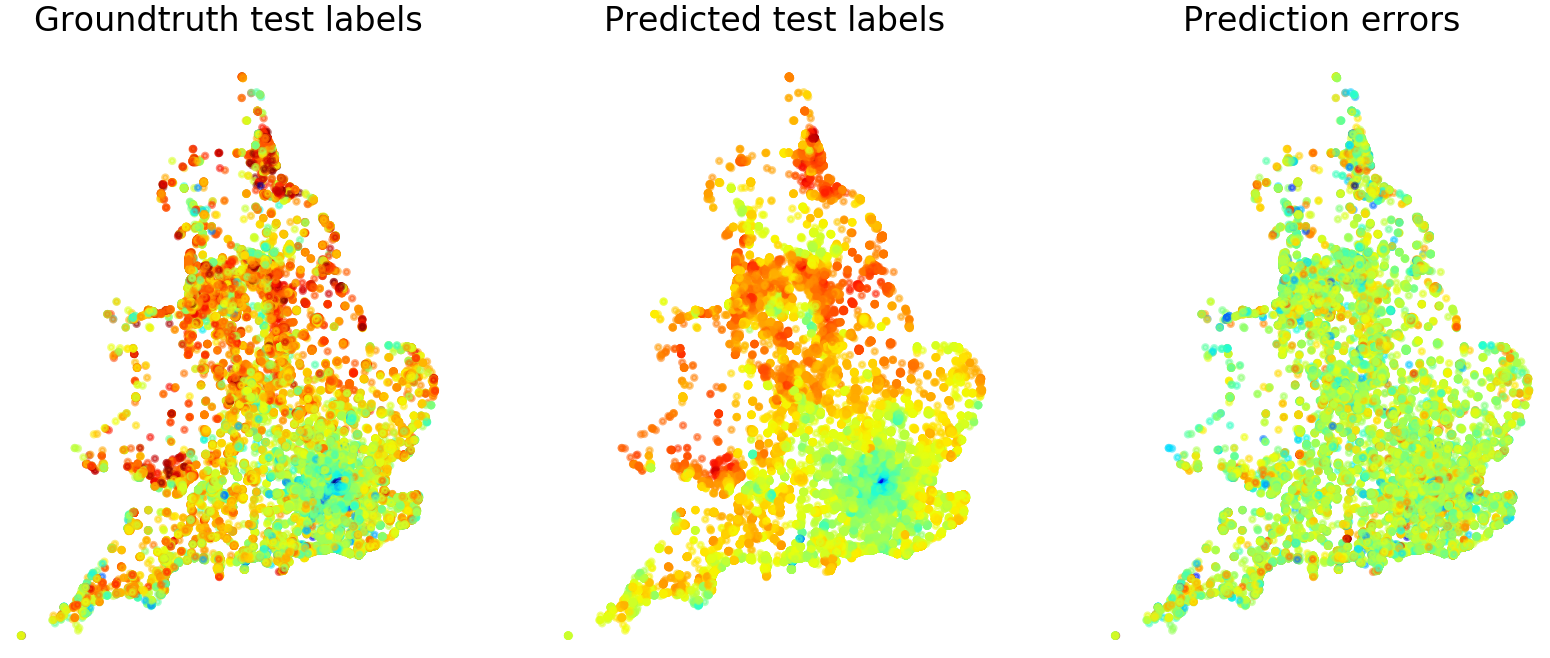}
\caption{{Visualisation of the online BSP-Forest's spatial predictions on the UK Apartment Price data. 
Plots show [L-R] actual test data, predictions, {and prediction errors}.
Red--blue colour denotes low--high prices.}}
\label{fig:Friedman_result}
\end{figure}
Figure~\ref{fig:Friedman_result} illustrates the observed and predicted labels (and their difference) for the UK Apartment Price Data. The online BSP-Forest appears to be able to capture the price variation reasonably well, and provide an accurate prediction of the true test data. Spatially, the prediction error looks broadly pattern free (in colour distribution), indicating that the regression model is adequate for these data. 

\end{document}